\newcommand{\ignore}[1]{}
\newcommand{\keyw}[1]{\textit{\textbf{Index Terms--}}\textbf{#1}}
\newtheorem{convergenceTheorem}{Theorem}
\newtheorem{convergenceLemma}{Lemma}
\newtheorem{convergenceDefinition}{Definition}
\newtheorem{convergenceAssumption}{Assumption}
\begin{document}

\title{A theoretical guideline for designing an effective adaptive particle swarm}

\author{
	Mohammad Reza Bonyadi 
	\thanks{
		M. R. Bonyadi (m.bonyadi@uq.edu.au, rezabny@gmail.com) is with the Centre for Advanced Imaging (CAI), the University of Queensland, Brisbane, QLD 4072, Australia, and the Optimisation and Logistics Group, The University of Adelaide, Adelaide 5005, Australia. 
		
	}
}

\markboth{}%
{{Bonyadi}: A theoretical guideline for designing an effective adaptive particle swarm}

\maketitle
\IEEEpeerreviewmaketitle



\maketitle

\begin{abstract}
In this paper we theoretically investigate underlying assumptions that have been used for designing adaptive particle swarm optimization algorithms in the past years. We relate these assumptions to the movement patterns of particles controlled by coefficient values (inertia weight and acceleration coefficient) and introduce three factors, namely the autocorrelation of the particle positions, the average movement distance of the particle in each iteration, and the focus of the search, that describe these movement patterns. We show how these factors represent movement patterns of a particle within a swarm and how they are affected by particle coefficients (i.e., inertia weight and acceleration coefficients). We derive equations that provide exact coefficient values to guarantee achieving a desired movement pattern defined by these three factors within a swarm. We then relate these movements to the searching capability of particles and provide guideline for designing potentially successful adaptive methods to control coefficients in particle swarm. Finally, we propose a new simple time adaptive particle swarm and compare its results with previous adaptive particle swarm approaches. Our experiments show that the theoretical findings indeed provide a beneficial guideline for successful adaptation of the coefficients in the particle swarm optimization algorithm.
\end{abstract}
\keyw{Particle swarm optimization; covariance; correlation; stability}

\section{Introduction and motivation}
Particle swarm optimization (PSO) is a \cite{Smith} stochastic population-based optimization algorithm developed by \cite{PSOBaseKenn}. PSO has been applied to many optimization problems such as artificial neural network training and pattern classification~\cite{Engel05Fundamental,Poli08AnalPub}. Since 1995, different aspects of the algorithm, such as local convergence, invariance, stability, parameter setting, and topology have been investigated and many variants of the algorithm have been proposed \cite{bonyadi2016particle}. The movement pattern of particles in PSO, however, has only been investigated in a handful of articles \cite{OzcanSurfing1999,Trelea03Conv,Bonyadi2015,bonyadi2016particle}. Such analysis is very important for understanding why the algorithm performs well or fail for a given problem.

\textbf{Motivation:} In 2016, a comprehensive experimental study \cite{harrison2016inertia} showed that none of the adaptive (including time adaptive and self adaptive) approaches to control the inertia weight in PSO would work significantly better than the constant inertia weight proposed by \cite{Clerc02Explo} in a standard benchmark involving 60 standard test cases. This finding indicates that the underlying assumptions based on which these adaptive approaches have been designed does not hold for all search spaces, leading the adaptive methods to a poor performance comparing to a "good" constant value (e.g., proposed in \cite{Clerc02Explo}) for the coefficients. 

In this paper we relate this observation to the movement patterns of particles and investigate whether frequently used assumptions for adaptation are correct. In particular, we theoretically analyze the impact of coefficient values (inertia weight and acceleration coefficients) on the movement patterns of particles and relate those movement patterns to the local and global search abilities of particles. Our theoretical findings provide novel insights on fundamental assumptions to design effective adaptive coefficients that actually improve PSO in a more general setting\footnote{We only focus on PSO variants in which the coefficients have been investigated to improve the performance and other type of changes in the algorithm (e.g., hybridization with other methods, population size, see \cite{bonyadi2016particle} for other possible changes) are left out of the scope of this paper.}. We propose a simple time adaptive PSO based on our theoretical findings and compare our results with existing adaptive methods proposed in the literature (e.g., \cite{yang2015low,jiao2008dynamic,feng2007chaotic,chen2006natural,tanweer2015self,chauhan2013novel,Nickabadi11NovelPSO}).

Without loss of generality, this paper only considers minimization problems defined as follows:
\begin{equation}
\text{find } \vec{x}\in S \subseteq \mathbb{R}^d \text{ such that } \forall \vec{y}\in S, f(\vec{x}) \le f(\vec{y})
\end{equation}

\noindent where $ S $ is the \textit{search space} defined by $ \{x|l_i \le x_i \le u_i \text{ for all } i\} $, $ l_i $ and $ u_i $ are lower bound and upper bound of the values of the $ i^{th} $ dimension of $ S $, $ d $ is the number of dimensions, and $ f:\mathbb{R}^d \to \mathbb{R} $ is the objective function. The set of points that are generated by $ f(\vec{x}) $ for all $ \vec{x} \in S $ is called the landscape.

The structure of this paper is as follows. After a brief background on previous related works (section \ref{sec:background}), we show how the autocorrelation of positions of a particle (section \ref{sec:covcorr}), the expected movement distance (section \ref{sec:variance}), and the focus of the search (section \ref{sec:focussearch}) may characterize movement patterns of a particle. Then, we derive a system of equations (section \ref{sec:searchability}) that relates particle coefficients to some quantitative measure of these three factors of movement patterns. We provide the analytical solution of this system of equations that is in fact the coefficient values that guarantee achieving a given movement pattern. We finally propose a new adaptive approach and test its performance based on our theoretical findings.

\section{Background}
\label{sec:background}
In this section we provide a brief background information on early variants of PSO, existing studies on the variance of movement of particles in PSO, and different patterns of movements in PSO.

\subsection{Particle swarm optimization}
\label{Sec:PSOBasics}
Each particle in the Original PSO (OPSO)~\cite{PSOBaseEbr,PSOBaseKenn} contains three vectors: position ($ \vec{x}_t^{~i} $), velocity ($ \vec{v}_t^{~i} $), and personal best ($ \vec{p}_t^{~i} $). In OPSO, the $ j^{th} $ dimension of the position of a particle $ i $ is updated by
\begin{equation}
x_{t+1}^{i,j}=x_t^{i,j}+v_{t+1}^{i,j}
\label{Eq:positionUpdate}
\end{equation}
Dimension $ j $ of the velocity of particle $ i $, $ v_{t+1}^{i,j} $, is calculated by 
\begin{equation}
	v_{t+1}^{i,j}=v_t^{i,j}+\phi^{i,j}_{1,t}\left(p_t^{i,j}-x_t^{i,j}\right)+\phi^{i,j}_{2,t}\left(g_t^j-x_t^{i,j}\right)
	\label{Eq:velocOpso}
\end{equation}
\noindent where $ \phi^{i,j}_{1,t} $ and $ \phi^{i,j}_{1,t} $ are taken from two uniform random variables, $ \phi_1 $ and $ \phi_2 $, in $ [0, c_1] $ and $ [0, c_2] $ respectively (called \textit{acceleration coefficients}), and $ \vec{g}_t $ is the best personal best of the swarm. The particle that its personal best is $ \vec{g}_t $ is called the \textit{global best particle}. The vector $ \vec{p}_t^{~i} $ (position of the personal best of particle $ i $ at time $ t $) is updated by 
\begin{equation}
\vec{p}_{t+1}^{~i}=\left\{ 	
\begin{array}{*{35}{l}} 
\vec{x}_{t+1}^{~i} & f\left(\vec{x}_{t+1}^{~i}\right) < f\left(\vec{p}_{t}^{~i}\right)-\epsilon_0\text{ and }\vec{x}_{t+1}^{~i}\in S \\ 
\vec{p}_{t}^{~i} & \text{ otherwise } \\
\end{array} 	
\right.
\label{Eq:pUpdate}
\end{equation}
\noindent where $ \epsilon_0 $ is an arbitrarily small real value that represents the precision of the calculations. This constant can be set to the smallest possible value in the simulations. 

OPSO was studied by many researchers since 1995 and many new variants were proposed \cite{bonyadi2016particle}. The most frequently used variant of OPSO was proposed by \cite{Shi98ModPSO} (called IPSO, Inertia PSO, throughout this paper) in which velocity update rule was revised as follows:
\begin{equation}
	v_{t+1}^{i,j}=\omega^{i,j}_t v_t^{i,j}+\phi^{i,j}_{1,t}\left(p_t^{i,j}-x_t^{i,j}\right)+\phi^{i,j}_{2,t}\left(g_t^j-x_t^{i,j}\right)
	\label{Eq:VelocIPSO}
\end{equation}
\noindent where $ \omega^{i,j}_t $ is a constant real value called \textit{inertia weight} for all $ t $, $ i $, and $ j $. The main purpose for introduction of inertia weight was to enable balancing between global and local search ability of the particle \cite{Shi98ModPSO}.

\subsection{Variance and expectation of movement}
\label{sec:variancemovement}
Perhaps the first study\footnote{For theoretical analyses that investigate an arbitrary particle, the index $ i $ is ignored. Also, as the position and velocity of most PSO variants (including OPSO and IPSO) are updated for each dimension independently, any analyses conducted in one dimensional case is generalizable to multidimensional cases, that allows dropping the dimension index $ j $.} that investigated the variance of positions for a particle in IPSO was \cite{Jiang07StochConv} that was extended further by \cite{Poli09MeanAndVar}. These studies proved that if the variance of the positions of a particle converge to a fixed point\footnote{The fixed point of a sequence generated by a recursion $ z_{t+1}=f(z_t) $ is defined by $ \hat{z}=f(\hat{z}) $. If the recursion is convergent and continuous then there exists $ \hat{z} $ such that $ z_t \to \hat{z} $ when $ t $ grows.} then $ c<\frac{12(\omega^2-1)}{5\omega-7} $ where $c =c_1=c_2$. Both these articles conducted their analyses under the following assumption:
\begin{convergenceAssumption}
	\label{ass:simplifiedPSO}
	$ p_t=p_{t-1} $, $ g_t=g_{t-1} $, and $ \omega_t=\omega_{t-1} $ for all $ t $, and $ \phi_{1,t} $ and $ \phi_{1,t} $ are taken from two uniform random distributions, $ \phi_1 $ and $ \phi_2 $, in the interval $ [0, c1] $ and $ [0, c_2] $, respectively.
\end{convergenceAssumption}
This assumption is not quite realistic as $ p_t $ and $ g_t $ are not actually constant during a real run. Several articles tried to investigate particles under more realistic assumptions (interested readers are referred to \cite{bonyadi2016particle} for full discussion on this topic). \cite{Bonyadi2015Stagnation} modified this assumption as follows:
\begin{convergenceAssumption}
	\label{ass:genericRandomVariable}
	$ p_t $, $ g_t $, $ \omega_t $, $ \phi_{1,t} $, and $ \phi_{2,t} $ are taken from arbitrary random variables, $ p $, $ g $, $ \omega $, $ \phi_1 $, and $ \phi_2 $, with given expectation and variance for all $ t $.
\end{convergenceAssumption}
This assumption makes theoretical analyses independent of the objective function and the swarm size and topology \cite{Bonyadi2015Stagnation}. The only difference between a "swarm" of one particle and a swarm of multiple particles is that the information is shared through $ g_t $ (or a local best), i.e., $ g_t $ could be updated by other particles. As Assumption \ref{ass:genericRandomVariable} allows updating $ g_t $ randomly, the impact of other particles on the particle under analysis is already taken into account. In addition, from particles perspective, the only difference between objective functions is the way $ g_t $ and $ p_t $ are updated. Hence, random changes in $ g_t $ and $ p_t $ simulates the impact of the objective function as well. Hence, any conclusion extracted from this assumption about one single particle (e.g., stability, movement patterns) holds for any particle within a swarm of \textit{any arbitrary size and topology} \cite{Bonyadi2015Stagnation}\footnote{One should note that this assumption, although more realistic than Assumption \ref{ass:simplifiedPSO}, is still not the exact representative of PSO and there are still rooms for further improvements. However, this is the most realistic assumption made so far in the PSO literature, hence, it is considered in this article for all further analyses. One can find a more recent extension of this assumption in \cite{cleghorn2017particle}.}. 

The position update rule of PSO was represented by a stochastic recursion in \cite{Bonyadi2015Stagnation}, formulated by: 
\begin{equation}
x_{t+1}=lx_{t}-\omega x_{t-1}+{\phi_1}p+{\phi_2}g
\label{Eq:genericPSO}
\end{equation}
\noindent where $ l=1+\omega-\phi_1-\phi_2 $, $ p $, $ g $, $ \phi_1 $, $ \phi_2 $, and $ \omega $ are random variables with given expected values ($ \mu $) and standard deviations ($ \sigma $) generated at each iteration (Assumption \ref{ass:genericRandomVariable}). It was shown that the matrix form in Eq. \ref{eq:fullmatrixform} describes the movement of particles under the Assumption \ref{ass:genericRandomVariable}:
\begin{eqnarray}
	\vec{z}_{t+1}=M\vec{z}_t+\vec{b}
	\label{eq:fullmatrixform}
\end{eqnarray}
\noindent where $ \vec{z}_t=[E(x_t)~E(x_{t-1})~E(x_t^2)~E(x_{t-1}^2)~E(x_tx_{t-1})]^T $, $ \vec{b}=[E(P)~0~E(P^2)~0~0]^T $ ($ P={\phi_1}p_t+{\phi_2}g_t $), and 
\begin{equation*}
	M=\begin{bmatrix}
		E(l) & -E(\omega) & 0 & 0 & 0\\ 
		1 & 0 & 0 & 0 & 0\\ 
		2E(lP) & -2E(wP) & E(l^2) & E(\omega^2) & -2E(l\omega)\\ 
		0 & 0 & 1 & 0 & 0\\ 
		E(P) & 0 & E(l) & 0 & -E(\omega)
	\end{bmatrix}
\end{equation*}
In the matrix $ M $:
\begin{itemize}
	\item $ E(l)=1+\mu_\omega-\mu_{\phi_1}-\mu_{\phi_2} $
	\item $ E(\omega^2)=\sigma_\omega^2+\mu_\omega^2 $
	\item $ E({\phi_1}^2)=\sigma_{\phi_1}^2+\mu_{\phi_1}^2 $
	\item $ E({\phi_2}^2)=\sigma_{\phi_2}^2+\mu_{\phi_2}^2 $
	\item $ E(\omega P)=\mu_\omega(\mu_{\phi_1}\mu_p+\mu_{\phi_2}\mu_g) $
	\item $ E(l^2)=1+E(\omega^2)+E(\phi_1^2)+E(\phi_2^2)+2\mu_\omega-
	2(\mu_{\phi_1}+\mu_{\phi_2})-2\mu_\omega(\mu_{\phi_1}+\mu_{\phi_2})+2\mu_{\phi_1}\mu_{\phi_2} $
	\item $ E(P)=\mu_{\phi_1}\mu_p+\mu_{\phi_2}\mu_g $
	\item $ E(P^2)=E(p^2)E(\phi_1^2)+E(g^2)E(\phi_2^2)+2\mu_{\phi_1}\mu_{\phi_2}\mu_p\mu_g $
	\item $ E(lP)=\mu_{\phi_1} \mu_p+\mu_{\phi_2}\mu_g+\mu_w\mu_{\phi_1}\mu_p+\mu_\omega \mu_{\phi_2}\mu_g-\mu_pE(\phi_1^2)-\mu_{\phi_1}\mu_{\phi_2}(\mu_p+\mu_g)-\mu_gE(\phi_2^2) $
\end{itemize}

Although $ p_t $ and $ g_t $ are not constant during the run, the vector $ \vec{b} $ remains constant as it depends only on the mean and variance of $ p $ and $ g $. Analyses by \cite{Bonyadi2015Stagnation} showed that the eigenvalues of the matrix $ M $ are all independent of the mean and variance of $ p $ and $ g $. Hence, whether the variance of the positions of the particle is convergent is independent of how $ p $ and $ g $ are updated, under the Assumption \ref{ass:genericRandomVariable}. Using this matrix, it was proven that the necessary and sufficient conditions for the convergence of expectation of the particle positions is:
\begin{eqnarray}
-1 < \mu_\omega<1 \text{ and } 0<\mu_{\phi_1}+\mu_{\phi_2}<2(\mu_\omega+1)
\label{Eq:expectationConvergence}
\end{eqnarray}
Also, the fixed point of expectation was calculated as: 
\begin{eqnarray}
E_x=\frac{\mu_{\phi_1}\mu_p+\mu_{\phi_2}\mu_g}{\mu_{\phi_1}+\mu_{\phi_2}}
\label{Eq:expectationEquilibrium}
\end{eqnarray}
The fixed point of the variance was calculated \cite{Bonyadi2015Stagnation} by:
\begin{eqnarray}
	V_x=\hat{z}_3-\hat{z}_1^2=-\frac{k_3+k_4}{k_1k_2}(\mu_\omega + 1)
	\label{Eq:varianceFixPoint}
\end{eqnarray}
\noindent where $ \hat{z} $ is the fixed point of the recursion in Eq. \ref{Eq:genericPSO} and
\begin{itemize}
	\item $ k_1=(\mu_{\phi_1} + \mu_{\phi_2})^2 $,
	\item $ k_2=k_1(1-\mu_{\omega})+2(\mu_{\phi_1}+\mu_{\phi_2})(\mu_{\omega}^2+\sigma_{\omega}^2-1)+(\sigma_{\phi_1}^2+\sigma_{\phi_2}^2)(\mu_{\omega}+1) $, 
	\item $ k_3=k_1(\mu_{\phi_1}^2\sigma_p^2 + \mu_{\phi_2}^2\sigma_g^2 + \sigma_{\phi_1}^2\sigma_p^2 + \sigma_{\phi_2}^2\sigma_g^2) $,
	\item $ k_4=(\mu_{\phi_1}^2\sigma_{\phi_2}^2 + \mu_{\phi_2}^2\sigma_{\phi_1}^2)(\mu_g - \mu_p)^2 $.
\end{itemize}

It was proven that Eq. \ref{Eq:PSOConvergenceConditions} is a necessary condition for the convergence of the variance of particle's positions under the Assumption \ref{ass:genericRandomVariable}.
\begin{equation}
\begin{cases}
Condition~1: & -1<\mu_\omega<1\\ 
Condition~2: & 0<\mu_{\phi_1}+\mu_{\phi_2}<2(1+\mu_{\omega})\\
Condition~3: & k_2<0
\end{cases}
\label{Eq:PSOConvergenceConditions}
\end{equation}

It was experimentally shown that Eq. \ref{Eq:PSOConvergenceConditions} is also sufficient for the convergence of variance. Although the Assumption \ref{ass:genericRandomVariable} is still not the exact representation of particle movements, it is the most realistic model in the literature.

\subsection{Movement patterns}
\label{sec:patternmovement}
Different coefficient values may result in convergence or divergence of particles. These coefficients, however, impact another aspect of particles that is the movement pattern of particle positions \cite{Trelea03Conv,Bonyadi2015,BonyadiMovementPattern2017,bonyadi2016particle}. For example, a particle in IPSO with $ \omega=0.98 $ and $ c_1=c_2=0.198 $ oscillates smoothly while a particle with $ \omega=0.73 $ and $ c_1=c_2=1.64 $ oscillates more chaotically (see Fig. \ref{fig:IPSOBehaviourTime}). These oscillations might take place within a small range or a large range that is also dependent on the coefficients values. These patterns play important roles in the performance of the algorithm. For example, a particle that moves smoothly in the search space can be potentially more effective at the latter stages of the search process than a particle that jumps all over the search space~\cite{Bonyadi2015}. 

Despite the importance of these patterns, there have not been many studies focused on this topic \cite{bonyadi2016particle}. Most previous studies simply assumed that local or global search abilities of particles is a linear function of $ \omega $ \cite{Shi98ModPSO,zheng2003convergence}, an assumption that constituted a foundation for many adaptive approaches \cite{yang2015low,jiao2008dynamic,feng2007chaotic,chen2006natural,tanweer2015self,chauhan2013novel,Nickabadi11NovelPSO,Chatterjee06NonliomegaPSO,Zhan09AdaptivePSO} (see \cite{bonyadi2016particle,harrison2016inertia} for complete discussion). It was shown, however, that none of these approaches perform better than the constant coefficient proposed in \cite{Clerc02Explo} on a set of 60 benchmark functions \cite{harrison2016inertia}, that motivates us to investigate why is this the case.
\begin{figure}
	\centering
	\includegraphics[clip,width=0.5\textwidth]{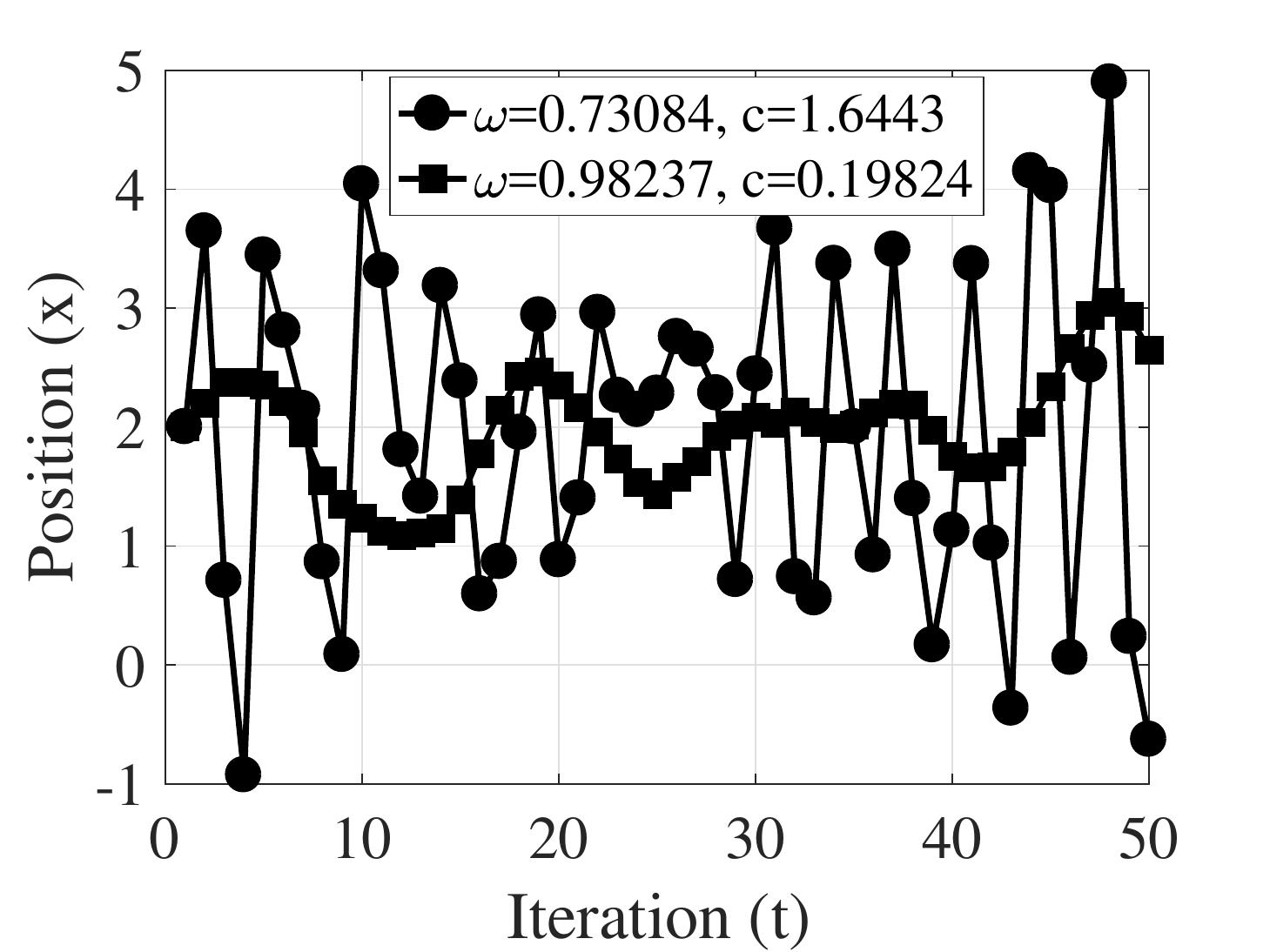}
	\caption{A particle in IPSO with $ \omega=0.98237 $ and $ c_1=c_2=0.19824 $ fluctuates smoothly while a particle with $ \omega=0.73084 $ and $ c_1=c_2=1.6443 $ oscillates more chaotically. Personal best and global best were updated randomly to simulate the impact of other particles and the objective function.}
	\label{fig:IPSOBehaviourTime}
\end{figure}

The trajectory of positions of a particle in OPSO was investigated by \cite{OzcanSurfing1999} where the update rules were simplified by replacing $ \phi_1 $ and $ \phi_2 $ by constants $ c_1 $ and $ c_2 $. The oscillation pattern and the magnitude of $ x_t $ were investigated for that simplified system and the effects of changing coefficients were visually illustrated. \cite{Trelea03Conv} categorized movement patterns of particles in IPSO into 4 groups (see Figure \ref{fig:TelereaPattern} (a)): non-oscillatory (particle position does not oscillate during the run), harmonic (particle position oscillates smoothly similar to a wave), zigzagging (particle position oscillate significantly at each iteration), and harmonic-zigzagging (combination of significant oscillation and wave-like oscillation). It was found that different patterns are observed by changing the values of coefficients, however, the patterns are not simple linear functions of the coefficients (see Figure \ref{fig:TelereaPattern}(a)). Both of these articles conduced their analyses under the Assumption \ref{ass:simplifiedPSO}.
\begin{figure*}
	\begin{tabular}{cc}
		\includegraphics[clip,width=0.5\textwidth]{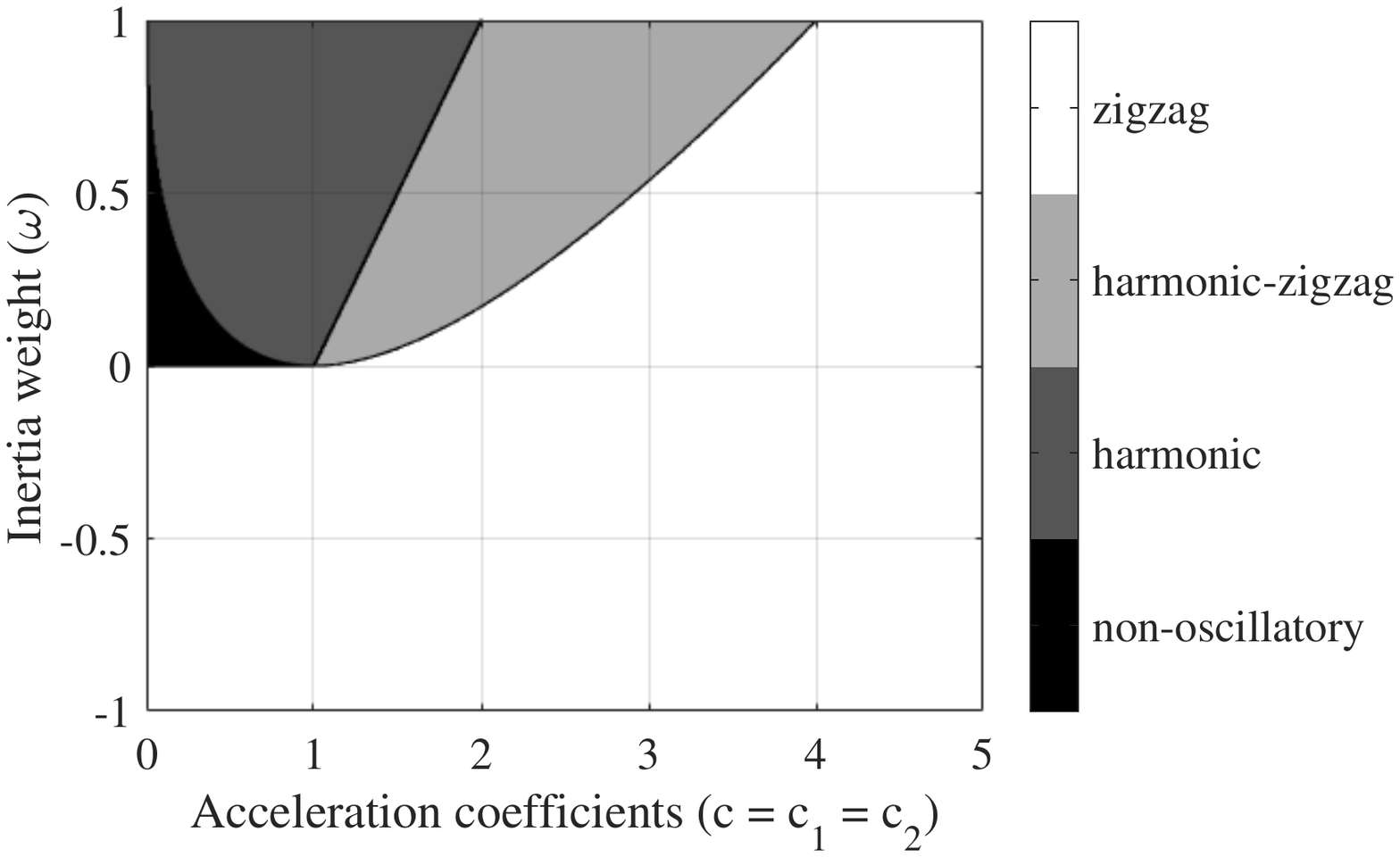} & \includegraphics[width=0.42\textwidth]{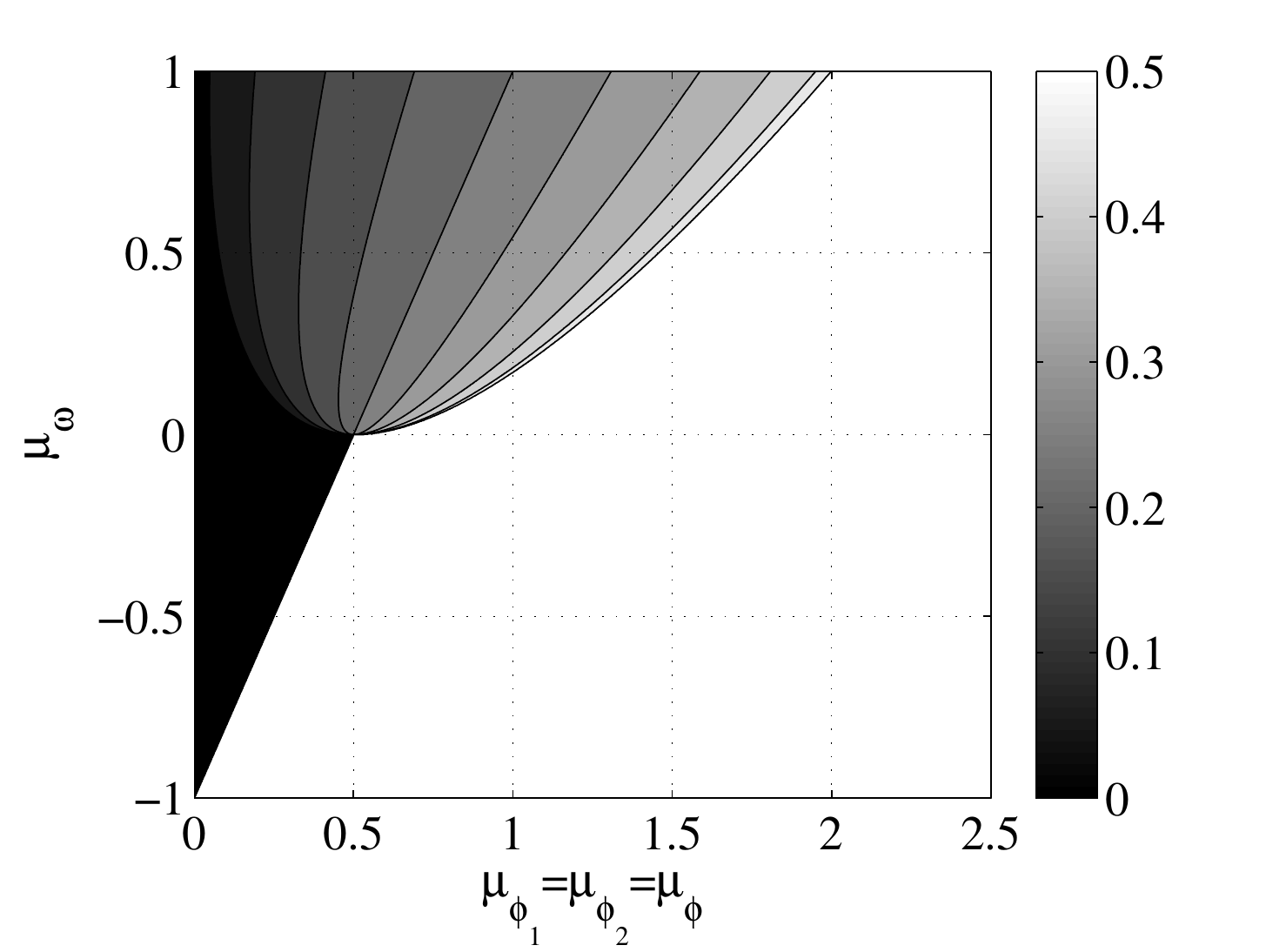} \\
		(a)  & (b) 
	\end{tabular}
	\centering	
	\caption{(a) \cite{Trelea03Conv} showed that the particles with different parameters may exhibit different movement patterns, in 4 categories. (b) \cite{BonyadiMovementPattern2017} represented these patterns by the movement frequency that is a non-linear function of coefficients.}
	\label{fig:TelereaPattern}
\end{figure*}

The impact of coefficients on the movement pattern of particles was investigated in \cite{Bonyadi2015} through some experiments. It was found that the base frequency of particle positions (the frequency of the largest amplitude among the Fourier series coefficients of the particle positions) has a direct relationship with the patterns of oscillation. This observation was used to estimate (based on some experiments) the boundaries corresponding to different oscillation patterns of IPSO and another PSO variant.

A recent article by \cite{BonyadiMovementPattern2017} investigated movement patterns of particles theoretically under the Assumption \ref{ass:genericRandomVariable}. They introduced two factors to characterize movement patterns: base frequency and the expected movement range. They formalized the base frequency introduced in \cite{Bonyadi2015} and used that as a measure of randomness for particle movements (see Figure \ref{fig:TelereaPattern} (b)). This measure generalized findings by \cite{Trelea03Conv} about movement patterns of the particles. The authors derived the relationship between the base frequency and the coefficients of PSO for $ \omega >0 $. They simplified the calculations for IPSO for the case where $ c=c_1=c_2 $. The variance was introduced as a measure for the range of movement and the relationship between this measure and coefficients was also calculated when $ c=c_1=c_2 $. Under these settings for IPSO ($ c=c_1=c_2 $ and $ \omega >0 $), a system of equations was introduced for which the solutions were the values of $ c $ and $ \omega $ to guarantee achieving a given base frequency and a range of movement. This system of equations was simplified to a degree-4 polynomial equation for which the solutions provided values for $ \omega $ to achieve a given base frequency and range of movement. Finally, it was shown that different coefficient values impose different base frequency and variance, a piece of information that can be used to optimize parameters to deal with various search spaces. After theoretical and experimental analyses, it was suggested to use $ c=1.711897 $ and $ \omega=0.711897 $ for applications.

\subsection{Adaptive and self adaptive coefficients}
\label{sec:bkg-adaptivecoefs}
There has been a large number of articles on adaptation of coefficients in PSO \cite{bonyadi2016particle,harrison2016inertia}. Most of adaptive approaches are based on the assumption that "increasing inertia weight leads the particle to a better global search", first introduced by \cite{Shi98Parameter}. Ideas presented in \cite{yang2015low,jiao2008dynamic,feng2007chaotic,chen2006natural} are examples of time-adaptive approaches based on this assumption and \cite{tanweer2015self,chauhan2013novel,Nickabadi11NovelPSO} are example articles in which self-adaptive approaches based on this assumption were proposed (see \cite{bonyadi2016particle,harrison2016inertia} for more examples). This means that decreasing the inertia weight during the iterations enables the particles to perform a better global search at the earlier stages and a better local search at the later stages of the searching process. In addition to this assumption, some of these articles (e.g., \cite{Nickabadi11NovelPSO,tanweer2015self}) further assumed that a larger $ \omega $ lead the particles to maintain their direction of movement that could be beneficial when the particle is improving. We investigate these two particular assumptions in this paper and provide theoretical evidence on whether or not they are correct in general.

\section{Characterization of movement patterns}
\label{sec:proposed}
We analyze a general formulation of PSO, introduced in Eq. \ref{Eq:genericPSO} in this paper. We also provide all analyses for IPSO, as an instance of PSO, defined as follows:
\begin{convergenceDefinition}
	IPSO is represented by a tuple $ <\omega, c, \alpha> $ through Eq. \ref{Eq:genericPSO} where both $ \phi_1 $ and $ \phi_2 $ follow the uniform distribution with $ \mu_{\phi_1}=\frac{c}{2} $, $ \mu_{\phi_2}=\alpha\frac{c}{2} $, $ c \in \mathbb{R} $, $ \alpha \in \mathbb{R} $, $ \sigma_{\phi_1}=\frac{c}{\sqrt{12}} $, and $ \sigma_{\phi_2}=\alpha\frac{c}{\sqrt{12}} $, and $ \omega $ is a constant ($ \sigma_\omega=0 $, $ \mu_\omega=\omega $).
	\label{def:IPSO}
\end{convergenceDefinition}
Based on this definition and convergence conditions in Eq. \ref{Eq:PSOConvergenceConditions}, the convergence conditions for IPSO$ <\omega, c, \alpha> $ are as follows:
\begin{equation}
\begin{cases}
Condition~1: & -1<\omega<1\\ 
Condition~2: & 0<c(1+\alpha)<4(1+\omega)\\
Condition~3: & k_2<0
\end{cases}
\label{Eq:IPSOConvergenceConditions}
\end{equation}

where $ k_2=\frac{c^2(3(1+\alpha)^2+(1+\alpha^2)(1+\omega))}{12} + c(1+\alpha)(\omega^2-1) $. From here on, whenever we use the term IPSO we refer to this definition and use the notation IPSO$ <\omega, c, \alpha> $ to specify the parameters. 

We use the Assumption \ref{ass:genericRandomVariable} in all of our analyses in this paper that is the most realistic assumption in literature (personal best and global best are updated) for theoretical analyses of PSO formulated by Eq. \ref{Eq:genericPSO}\footnote{It is important to note that this paper studies IPSO with this definition and any other types of PSO that do not fall into this definition (see \cite{bonyadi2016particle}) are out of the scope of this article.}. We first investigate factors that describe movement patterns in particles. Then, for each factor, we investigate how changing $ \omega $, $ c $, and $ \alpha $ would impact that factor in IPSO$ <\omega, c, \alpha> $. We then discuss how this is related to the adaptation strategies and assumptions made by previous articles.

We investigate three factors in this paper that characterize movements of a particle: relationship between the current and previous positions of a particle (section \ref{sec:covcorr}), expected movement distance for the particle (section \ref{sec:variance}), and the focus of the search towards personal best or global best (section \ref{sec:focussearch}). These factors together may represent the movement pattern of a particle to a large extent. We analyze these factors for the positions generated by Eq. \ref{Eq:genericPSO} and IPSO$ <\omega, c, \alpha> $. 

Although the first two factors were discussed in \cite{BonyadiMovementPattern2017}, that study includes major limitations:
\begin{itemize}
	\item Focusing on the base frequency that corresponds with the correlation between the current and the previous position of a particle only. We will prove that the particle positions at an iteration $ t $ could be correlated with not only its previous position but any other position before that. 
	\item Formulating the relationship between positions only for cases where $ \mu_{\omega} >0 $. It is, however, clear that $ \mu_{\omega} \le 0 $ is also a viable choice in PSO, that has not been investigated before.
	\item Formulating movement pattern in IPSO only for cases where $ c=c_1=c_2 $ and $ \omega>0 $. It is, however, clear that this is quite restrictive as any choice for $ c_1 $ and $ c_2 $ could be considered in PSO.
	\item Formulating the coefficients--movement patterns relationship as a degree-4 polynomial that, if used for controlling the coefficients, adds an overhead to the original calculations of PSO. 
	\item Ignoring the extend the particle focuses the search towards personal or global best. It is, however, important to change the focus of the search around personal best or global best in different stages of the search.
\end{itemize}
These limitations are addressed in this paper.  

\subsection{Relationship between current and previous positions in PSO: autocorrelation}
\label{sec:covcorr}
The relationship between two random variables can be formulated by the correlation between them. Pearson correlation, in particular, formulates the degree of linear dependency between two random variables (from here on, whenever we use the term "correlation" it refers to the Pearson correlation). In the context of time series, each sample can be considered as a random variable, dependent on the previous samples. The correlation between two consecutive samples in a time series then indicates to what extent a new sample can be predicted by a linear function of the previous sample. This can be generalized to the concept of the Autocorrelation that indicates to what extent a new sample, $ t+1 $, can be predicted by a linear function of a sample $ t-i+1 $, for any $ i>0 $.

The relationship between the positions of a particle can be formulated by their correlation. For example, the correlation between $ x_t $ and $ x_{t+1} $ generated by IPSO$<0.73084,1.6443,1>$ (the sequence shown by squares in Fig. \ref{fig:IPSOBehaviourTime}) is 0.047 while it is 0.897 for IPSO$ <0.98237,0.19824,1> $ (the sequence shown by circles in Fig. \ref{fig:IPSOBehaviourTime}) \footnote{This was calculated by simulating a particle with a given parameter set for a long run (1000 iterations in our experiment), shift the generated sequence by one sample, and calculate the correlation between the original and the shifted sequences. This was done when $ p $ and $ g $ where also random variables.}. This means that the positions at each iteration generated by a particle with the former settings could be expressed by a linear equation less accurately (more linear independence) than for a particle with the latter settings (less linear independence). The closer this correlation is to zero, the more linearly independent the consecutive positions are. Figure \ref{fig:corrExam} shows these correlations. 
\begin{figure*}[t]
	\centering
	\begin{tabular}{cc}
		\includegraphics[width=0.5\textwidth]{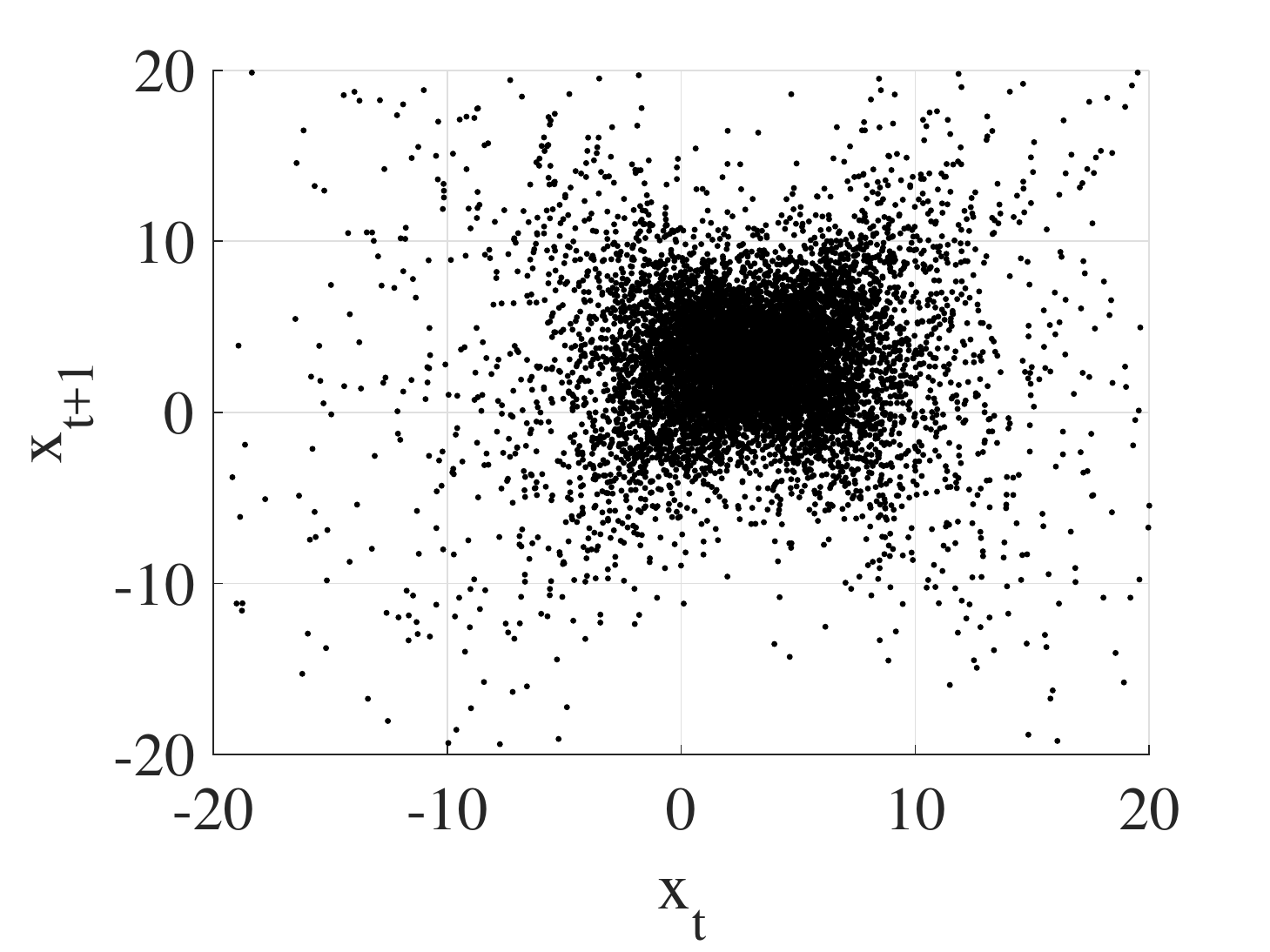} & \includegraphics[width=0.5\textwidth]{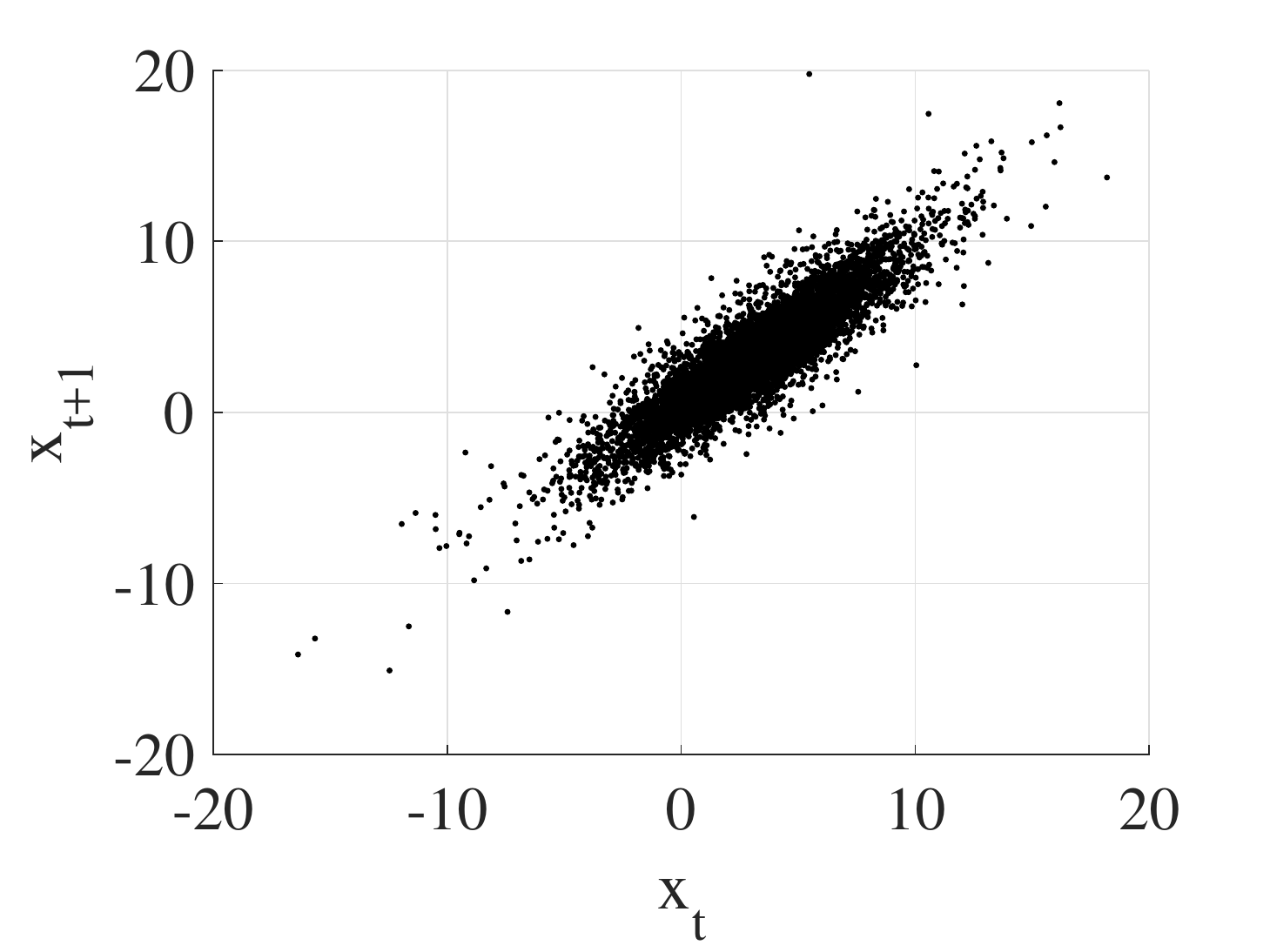} \\
		(a) & (b) \\
	\end{tabular}
	\caption{The relationship between $ x_t $ and $ x_{t+1} $ for IPSO (a) $ <0.73084,1.6443,1> $ and (b) $ <0.98237,0.19824,1> $. $ x_t $ and $ x_{t+1} $ in (a) are linearly independent.}
	\label{fig:corrExam}
\end{figure*}

The correlation between the position of a particle at iteration $ t+1 $ and $ t-i+1 $ is shown by $ \rho_i $ ($ \rho_0=1 $). If $ \rho_i $ is close to zero for all $ i>0 $ then $ x_{t+1} $ is linearly independent of all previous positions that lead to a random movement. We analytically derive the autocorrelation for the sequence of positions generated by Eq. \ref{Eq:genericPSO} at its equilibrium point. Let us start with the correlation between $ x_{t+1} $ and $ x_t $.
\begin{convergenceLemma}
	\label{lem:rho_1}
	The Pearson correlation between $ x_{t+1} $ and $ x_{t} $ generated by the stochastic recursion in Eq. \ref{Eq:genericPSO} at the equilibrium point is calculated by:
	\begin{equation}
	\label{Eq:rho_1}
	\rho_1=\frac{\mu_l}{\mu_\omega + 1}
	\end{equation}	
	where $ l=1 + \omega - \phi_1 - \phi_2 $ and $ \mu_l=E(l) $ (expectation of $ l $).
\end{convergenceLemma}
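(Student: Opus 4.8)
The plan is to express the Pearson correlation $\rho_1$ directly in terms of the fixed point $\hat{z}$ of the recursion in Eq. \ref{eq:fullmatrixform}, and then exploit the fifth row of that recursion to collapse it to a closed form. First I would observe that at the equilibrium point the second moments have stabilized, so the consecutive positions share the same variance, $\mathrm{Var}(x_{t+1})=\mathrm{Var}(x_{t})=V_x=\hat{z}_3-\hat{z}_1^2$, while the lag-one autocovariance is $\mathrm{Cov}(x_{t+1},x_{t})=E(x_{t+1}x_{t})-E_x^2=\hat{z}_5-\hat{z}_1^2$, using $E(x_{t+1})=E(x_t)=E_x=\hat{z}_1$. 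Hence
\[
\rho_1=\frac{\hat{z}_5-\hat{z}_1^2}{\hat{z}_3-\hat{z}_1^2}.
\]

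Next I would read off the fifth row of $\vec{z}_{t+1}=M\vec{z}_t+\vec{b}$, which encodes the identity $E(x_{t+1}x_t)=E(P)E(x_t)+E(l)E(x_t^2)-E(\omega)E(x_tx_{t-1})$. This is exactly what one obtains by multiplying Eq. \ref{Eq:genericPSO} by $x_t$, taking expectations, and using that $l$, $\omega$, and $P$ at iteration $t$ are drawn independently of $x_t$ and $x_{t-1}$ under Assumption \ref{ass:genericRandomVariable}. Evaluated at the fixed point, this reads $\hat{z}_5(1+\mu_\omega)=E(P)\hat{z}_1+\mu_l\hat{z}_3$, so that $\hat{z}_5=\big(E(P)\hat{z}_1+\mu_l\hat{z}_3\big)/(1+\mu_\omega)$.

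The decisive step is the simplification of the numerator $\hat{z}_5-\hat{z}_1^2$. Here I would substitute the two identities tying $E(P)$ and $\mu_l$ to the common quantity $s:=\mu_{\phi_1}+\mu_{\phi_2}$: from Eq. \ref{Eq:expectationEquilibrium} one has $E(P)=\mu_{\phi_1}\mu_p+\mu_{\phi_2}\mu_g=s\,\hat{z}_1$, and by definition $\mu_l=1+\mu_\omega-s$. Inserting these into $\hat{z}_5$ and collecting terms, the coefficient of $\hat{z}_1^2$ becomes $s-(1+\mu_\omega)=-\mu_l$, so the numerator factors cleanly as
\[
\hat{z}_5-\hat{z}_1^2=\frac{\mu_l\,\hat{z}_3-\mu_l\,\hat{z}_1^2}{1+\mu_\omega}=\frac{\mu_l}{1+\mu_\omega}\,(\hat{z}_3-\hat{z}_1^2).
\]
Dividing by $V_x=\hat{z}_3-\hat{z}_1^2$ cancels the variance and leaves $\rho_1=\mu_l/(\mu_\omega+1)$, as claimed.

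I expect the only delicate point to be this cancellation: recognizing that $E(P)=s\,\hat{z}_1$ is precisely what forces the coefficient of $\hat{z}_1^2$ to collapse to $-\mu_l$ and thereby factor out $(\hat{z}_3-\hat{z}_1^2)$; everything else is bookkeeping on the fifth-row recursion. A minor side obligation is to confirm that $V_x\neq 0$ so the division is legitimate, which holds whenever the variance fixed point in Eq. \ref{Eq:varianceFixPoint} is strictly positive, i.e. the particle genuinely moves at equilibrium.
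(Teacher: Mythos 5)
Your proposal is correct and follows essentially the same route as the paper: both compute the lag-one covariance at the fixed point as $\hat{z}_5-\hat{z}_1\hat{z}_2$ from the matrix recursion of Eq. \ref{eq:fullmatrixform}, divide by the variance fixed point $V_x=\hat{z}_3-\hat{z}_1^2$, and simplify. The only difference is that you explicitly carry out the "after simplifications" step the paper omits — the fifth-row fixed-point identity $\hat{z}_5(1+\mu_\omega)=E(P)\hat{z}_1+\mu_l\hat{z}_3$ together with $E(P)=(\mu_{\phi_1}+\mu_{\phi_2})\hat{z}_1$ forcing the factorization — which is a faithful reconstruction of the same argument, not a different one.
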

\begin{proof}
	The Pearson correlation ($ \rho $) between two random variables is calculated by $ \rho=\frac{Cov(a, b)}{\sigma(a)\sigma(b)} $, where $ \sigma(a) $ is the standard deviation of the random variable $ a $ and $ Cov(a,b) $ is the covariance between $ a $ and $ b $. The covariance between two random variables $ a $ and $ b $ is calculated by $ Cov(a,b)=E(ab)-E(a)E(b) $ where $ E(.) $ is the expectation operator. We calculate the covariance between $ x_{t+1} $ and $ x_t $ generated by the stochastic recursion in Eq. \ref{Eq:genericPSO} as $ Cov(x_{t+1},x_t)=E(x_{t+1}x_t)-E(x_{t+1})E(x_{t}) $. The fixed point of $ Cov(x_{t+1},x_t) $, shown by $ Cov_{t+1,t} $, can be calculated using the matrix form introduced in \cite{Bonyadi2015Stagnation} (see section \ref{sec:variancemovement}, Eq. \ref{eq:fullmatrixform}) as $ Cov_{t+1,t}=E(x_{t+1}x_t)-E(x_{t+1})E(x_t)=\hat{z}_5-\hat{z}_1\hat{z}_2 $. After simplifications, we calculate $ Cov_{t+1,t} $ as
	\begin{eqnarray}
	Cov_{t+1,t}= V_x(\frac{\mu_{l}}{\mu_\omega + 1})
	\end{eqnarray}	
	\noindent where $ V_x $ was introduced in section \ref{sec:variancemovement} and $ l=1 + \omega - \phi_1 - \phi_2 $. Recall that $ V_x $ is dependent on the moments of $ p $ and $ g $. When $ t $ is large we have $ \sigma(x_t)\sigma(x_{t+1})=\sigma^2(x_t)=V_x $, i.e. the variance of positions converges. Hence, the correlation between $ x_{t+1} $ and $ x_t $ at the equilibrium point is calculated by:
	\begin{equation}
	\label{Eq:correlation}
	\rho_1=\frac{\mu_l}{\mu_\omega + 1}
	\end{equation}
	\noindent where $ \rho_1 $ is the correlation between $ x_{t+1} $ and $ x_t $ and $ l=1 + \omega - \phi_1 - \phi_2 $. 	
\end{proof}
While \cite{BonyadiMovementPattern2017} formulated the relationship between $ x_t $ and $ x_{t+1} $ when $ \mu_\omega>0 $, the proposed equation in this paper, Eq. \ref{Eq:rho_1}, provides the correlation for any feasible value of $ \mu_{\phi_1} $,  $ \mu_{\phi_2} $, $ \mu_\omega $, and $ t $. Note also that this value is independent of $ p $ and $ g $, hence, the correlation remains constant even if the personal and global best move.

As it was discussed in section \ref{sec:variancemovement}, $ 0<\mu_{\phi_1}+ \mu_{\phi_2} < 2(\mu_\omega + 1) $ is a necessary condition for the convergence of expectation and variance of movement. If the coefficients are inside this boundary then the value of $ \rho_1 $ is a real value in $ (-1, 1) $ that indicates how positions of the particle at each iteration is related to its position at the previous iteration. If $ \rho_1=0 $ then there is no correlation between the position of the particle at each step and its previous position. This, however, does not provide any information about the correlation between $ x_{t+1} $ and $ x_{t-1} $. 
\begin{convergenceLemma}
	\label{lem:rho_2}
	The Pearson correlation between $ x_{t+1} $ and $ x_{t-1} $, shown by $ \rho_2 $, generated by the stochastic recursion in Eq. \ref{Eq:genericPSO} in the equilibrium point is calculated by:
	\begin{equation}
	\label{Eq:rho_2}
	\rho_2=\mu_l\rho_1-\mu_{\omega}\rho_0
	\end{equation}
	where $ \rho_0=1 $.
\end{convergenceLemma}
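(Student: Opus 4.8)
The plan is to compute $Cov(x_{t+1}, x_{t-1})$ directly from the recursion in Eq. \ref{Eq:genericPSO} and then normalize by the standard deviations at the equilibrium point, mirroring the structure of the proof of Lemma \ref{lem:rho_1}. Writing $P = \phi_1 p + \phi_2 g$, the recursion reads $x_{t+1} = l x_t - \omega x_{t-1} + P$, so by bilinearity of covariance,
\begin{equation*}
Cov(x_{t+1}, x_{t-1}) = Cov(l x_t, x_{t-1}) - Cov(\omega x_{t-1}, x_{t-1}) + Cov(P, x_{t-1}).
\end{equation*}

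First I would exploit the structural fact underlying Assumption \ref{ass:genericRandomVariable}: the coefficients $\omega$, $\phi_1$, $\phi_2$ (and hence $l$ and $P$) realized when forming $x_{t+1}$ are drawn afresh at iteration $t+1$, and are therefore independent of the earlier positions $x_t$ and $x_{t-1}$, which depend only on the draws up to iteration $t$. This independence lets me pull the means out of each term: $Cov(l x_t, x_{t-1}) = \mu_l Cov(x_t, x_{t-1})$, $Cov(\omega x_{t-1}, x_{t-1}) = \mu_\omega \sigma^2(x_{t-1})$, and $Cov(P, x_{t-1}) = 0$ since $p$, $g$, $\phi_1$, $\phi_2$ at step $t+1$ are all independent of $x_{t-1}$.

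Next I would pass to the equilibrium point. There the one-step covariance $Cov(x_t, x_{t-1})$ equals the fixed point $Cov_{t+1,t} = V_x \rho_1$ established inside the proof of Lemma \ref{lem:rho_1}, while $\sigma^2(x_{t-1}) \to V_x$. Substituting gives $Cov(x_{t+1}, x_{t-1}) = V_x(\mu_l \rho_1 - \mu_\omega)$. Finally, since all positions share the same limiting variance, $\sigma(x_{t+1})\sigma(x_{t-1}) = V_x$, so dividing yields $\rho_2 = \mu_l \rho_1 - \mu_\omega$, which is the claim once we recall $\rho_0 = 1$.

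The one genuinely delicate point, and the step I would be most careful about, is the independence claim, namely that the coefficient realizations at iteration $t+1$ are independent of $x_t$ and $x_{t-1}$; this is precisely what Assumption \ref{ass:genericRandomVariable} grants, and it is what makes the cross terms collapse into a mean times a lower-order moment. Everything else is bookkeeping: confirming the $P$ term vanishes and that the two surviving terms combine into $\mu_l \rho_1 - \mu_\omega \rho_0$. An equivalent alternative would read the relevant $\hat{z}$-components off the matrix form in Eq. \ref{eq:fullmatrixform} exactly as in Lemma \ref{lem:rho_1}, but the direct recursion argument is shorter and makes the recursive shape $\rho_2 = \mu_l \rho_1 - \mu_\omega \rho_0$ transparent, suggesting a general recurrence for the higher-lag autocorrelations $\rho_i$.
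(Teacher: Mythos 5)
Your proposal is correct and takes essentially the same route as the paper: expand $x_{t+1}$ through the recursion in Eq. \ref{Eq:genericPSO}, use the fact that the coefficients drawn at iteration $t+1$ are independent of $x_t$ and $x_{t-1}$, evaluate at the equilibrium point, and divide by $V_x$. The only difference is bookkeeping --- your covariance-bilinearity formulation makes the mean terms cancel automatically and yields the recursive form $\mu_l\rho_1-\mu_\omega\rho_0$ directly, whereas the paper expands raw moments via the fixed-point components $\hat{z}$ and an intermediate closed-form expression before simplifying (and, notably, your organization is exactly how the paper later proves the general recurrence in Theorem \ref{thr:rhos}).
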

\begin{proof}
	The covariance between $ x_{t+1} $ and $ x_{t-1} $ is calculated by 
	\begin{eqnarray*}
	Cov_{t+1,t-1}=E(x_{t+1}x_{t-1})-E(x_{t+1})E(x_{t-1})
	\end{eqnarray*}
	As  $ x_{t+1} $, $ x_{t-1} $, and $ x_{t} $ are taken from the same distribution, we can assume that $ E(x_{t+1})=E(x_{t-1})=E(x_{t}) $, $ \sigma(x_{t+1})=\sigma(x_{t})=\sigma(x_{t-1}) $. From Eq. \ref{Eq:genericPSO}, we find that $ E(x_{t+1}x_{t-1})=E(l)E(x_tx_{t-1})-E(\omega)E(x_{t-1}^2)+E(P)E(x_{t-1}) $. Hence, after simplifications, 
	\begin{eqnarray*}
	Cov_{t+1,t-1}=E(l)E(x_tx_{t-1})-E(\omega)E(x_{t-1}^2)+\\E(P)E(x_{t-1})-E^2(x_{t})
	\end{eqnarray*}
	This means that $ Cov_{t+1,t-1} $ in the equilibrium point is equal to $ E(l)\hat{z}_5-E(\omega)\hat{z}_4+E(P)\hat{z}_2-\hat{z}_2^2 $. After simplifications:
	\begin{equation}
	\rho_2=\frac{Cov_{t+1,t-1}}{V_x}=1-2(\mu_{\phi_1}+\mu_{\phi_2})+\frac{(\mu_{\phi_1}+\mu_{\phi_2})^2}{\mu_{\omega}+1}
	\end{equation}
	This could be further simplified to $ \mu_l\rho_1-\mu_{\omega}\rho_0 $, where $ \rho_0=1 $ and $ l=1+\omega-\phi_1-\phi_2 $. 
\end{proof}

By definition, the value of $ \rho_0 $ is the correlation between the particle position and itself at each $ t $ that is expected to be $ 1 $. This Lemma indicates that the positions might not be correlated at every step, but at every second step. In particular, it is interesting to notice that
\begin{itemize}
	\item if $ \rho_1=1 $ then $ \mu_{\omega}=\mu_l-1 $, hence, $ \rho_2=1 $,
	\item if $ \rho_1=-1 $ then $ \mu_{\omega}=-\mu_l-1 $, hence, $ \rho_2=1 $,
	\item if $ \rho_1=0 $ then $ \mu_l=0 $, hence, $ \rho_2=-\mu_{\omega} $.
\end{itemize}

Therefore, ensuring $ \rho_1=0 $ does not result in a complete random behavior but only a random behavior in every step. A complete random search imposes zero correlation between all steps of the algorithm. Hence, we calculate the correlation between $ x_{t+1} $ and $ x_{t-i+1} $ for any $ i>1 $ as follows:
\begin{convergenceTheorem}
	\label{thr:rhos}
	\ignore{Assume that the coefficients $ \omega $, $ \mu_{\phi_1} $, and $ \mu_{\phi_2} $ guarantee convergence of expectation (Eq. \ref{Eq:expectationConvergence}).} The Pearson correlation between $ x_{t+1} $ and $ x_{t-i+1} $, denoted by $ \rho_i $, generated by the stochastic recursion in Eq. \ref{Eq:genericPSO} at the equilibrium point is calculated by:
	\begin{equation}
		\rho_i=\mu_l\rho_{i-1}-\mu_\omega\rho_{i-2}
	\end{equation}
	for any $ i>1 $, where $ l=1+\omega-\phi_1-\phi_2 $, $ \rho_1=\frac{\mu_l}{\mu_{\omega}+1} $, and $ \rho_0=1 $.
\end{convergenceTheorem}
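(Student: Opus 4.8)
The plan is to proceed exactly as in the proofs of Lemma~\ref{lem:rho_1} and Lemma~\ref{lem:rho_2}, computing the lag-$i$ autocovariance $Cov_{t+1,t-i+1}=E(x_{t+1}x_{t-i+1})-E(x_{t+1})E(x_{t-i+1})$ at the equilibrium point and then dividing by $V_x$. First I would substitute the recursion $x_{t+1}=lx_t-\omega x_{t-1}+P$ (with $P=\phi_1 p+\phi_2 g$) into $E(x_{t+1}x_{t-i+1})$ and expand. The essential observation is that, under Assumption~\ref{ass:genericRandomVariable}, the coefficients $l$, $\omega$, and $P$ drawn at iteration $t+1$ are freshly sampled and therefore independent of every position generated up to iteration $t$. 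Because $i>1$ guarantees $t-i+1\le t-1$, the factor $x_{t-i+1}$ is strictly in the past, so the expectation factorizes termwise and yields
\begin{equation*}
E(x_{t+1}x_{t-i+1})=\mu_l\,E(x_tx_{t-i+1})-\mu_\omega\,E(x_{t-1}x_{t-i+1})+E(P)\,E(x_{t-i+1}).
\end{equation*}

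Next I would invoke stationarity at equilibrium, as was done implicitly in the earlier lemmas: all positions share the common mean $E_x$ and variance $V_x$, so each mixed moment can be written as $E(x_ax_b)=\gamma_{|a-b|}+E_x^2$, where $\gamma_k$ denotes the lag-$k$ autocovariance and $\gamma_k=\rho_kV_x$. Substituting these expressions turns the relation above into a statement relating $\gamma_i$, $\gamma_{i-1}$, and $\gamma_{i-2}$, but it also carries a residue of $E_x^2$ terms together with the term $E(P)E_x$. The key bookkeeping step is to eliminate these constants using the fixed-point equation for the mean, namely $E_x=(\mu_l-\mu_\omega)E_x+E(P)$, which follows by taking expectations of Eq.~\ref{Eq:genericPSO} at equilibrium. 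After replacing $E(P)E_x$ by $E_x^2(1-\mu_l+\mu_\omega)$, all $E_x^2$ contributions cancel and one is left with $\gamma_i=\mu_l\gamma_{i-1}-\mu_\omega\gamma_{i-2}$. Dividing through by $V_x$ gives $\rho_i=\mu_l\rho_{i-1}-\mu_\omega\rho_{i-2}$ for every $i>1$, with the base values $\rho_0=1$ and $\rho_1=\frac{\mu_l}{\mu_\omega+1}$ supplied by Lemma~\ref{lem:rho_1}; the case $i=2$ reproduces Lemma~\ref{lem:rho_2} and serves as a consistency check.

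I expect the main obstacle to be the careful justification of the termwise factorization, which is precisely where the hypothesis $i>1$ is used: for $i=1$ the factor $x_{t-i+1}=x_t$ is no longer strictly in the past of the coefficients multiplying it, so $E(x_tx_{t-i+1})$ degenerates into the second moment $E(x_t^2)$ rather than a genuine lag-$(i-1)$ autocovariance, and the clean recursion breaks down---this is exactly why $\rho_1$ had to be obtained separately. A secondary point requiring care is the cancellation of the constant $E_x^2$ terms; it is tempting to drop them prematurely, but they only vanish after the mean fixed-point identity is used, so I would keep them explicit until that substitution. Everything else is routine algebra already rehearsed in the two preceding lemmas.
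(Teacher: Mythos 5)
Your proof is correct and takes essentially the same route as the paper's: expand $E(x_{t+1}x_{t-i+1})$ through the recursion in Eq.~\ref{Eq:genericPSO}, factor out the freshly sampled coefficients, rewrite the mixed moments as stationary lag covariances, and cancel the residual $E_x^2$ and $E(P)E_x$ terms---your use of the mean fixed-point identity $E(P)=(1-\mu_l+\mu_\omega)E_x$ is just a cleaner packaging of the paper's explicit substitution of the formulas for $E(P)$ and $E_x$. One caveat on your closing remark: the termwise factorization is \emph{not} where $i>1$ enters, since under Assumption~\ref{ass:genericRandomVariable} the coefficients sampled to produce $x_{t+1}$ are independent of $x_t$ itself, so $E(lx_t^2)=\mu_l E(x_t^2)$ factorizes just as well; the hypothesis $i>1$ is needed only so that both lags $i-1$ and $i-2$ are nonnegative, and with the stationarity convention $\rho_{-1}=\rho_1$ the recursion at $i=1$ in fact reproduces Lemma~\ref{lem:rho_1} rather than breaking down.
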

\begin{proof}
	The value for $ Cov(t+1,t-i+1) $ (denoted by $ Cov_i $) is calculated by 
	\begin{eqnarray*}
	 Cov_i=Cov(t+1,t-i+1)= \\E(x_{t+1}x_{t-i+1})-E(x_{t+1})E(x_{t-i+1})=\\ 
	 E(lx_tx_{t-i+1}-\omega x_{t-1}x_{t-i+1}+Px_{t-i+1})- \\	 E(x_{t+1})E(x_{t-i+1})=E(l)E(x_tx_{t-i+1})-\\
	 \mu_\omega E(x_{t-1}x_{t-i+1})+ E(P)E(x_{t-i+1})-E(x_{t+1})E(x_{t-i+1})
	\end{eqnarray*}
	As $ t $ is large, $ E(x_{t})=E(x_{t+j}) $ for any $ j $ ($ j $ is much smaller than $ t $), $ E(x_tx_{t-i+1})=E(x_{t+1}x_{t-i+2}) $, and $ E(x_{t-1}x_{t-i+1})=E(x_{t+1}x_{t-i+3}) $. Also, 
	\begin{eqnarray*}
	Cov(t+1,t-i+1)=	E(l)E(x_tx_{t-i+1})-\\
	\mu_\omega E(x_{t-1}x_{t-i+1})+E(P)E(x_t)-E^2(x_t)=\\
	E(l)\left(E(x_{t}x_{t-i+1})-E(x_{t})(E(x_{t-i+1})+E(x_{t-i+1}))\right)-\\
	\mu_\omega\left(E(x_{t}x_{t-i+2})-E(x_{t})(E(x_{t-i+2})+E(x_{t-i+2}))\right)+\\
	E(P)E(x_t)-E^2(x_t)=E(l)Cov_{i-1}-\mu_\omega Cov_{i-2}+\\
	E(l)E^2(x_{t+1})-\mu_\omega E^2(x_{t})+E(P)E(x_t)-E^2(x_t)=\\
	E(l)Cov_{i-1}-\mu_\omega Cov_{i-2}+	E(P)E(x_t)-\\
	E^2(x_{t})(1-E(l)+\mu_\omega)
	\end{eqnarray*}
		
	After simplifications, we get
	\begin{eqnarray*}
		\rho_i=\frac{Cov(t+1,t-i+1)}{V_x}=	\mu_l\rho_{i-1}-\mu_\omega \rho_{i-2}+\\\frac{E(P)E_x-(1-E(l)+\mu_\omega)E^2_x}{V_x}
	\end{eqnarray*}
	It is interesting to see that 
	\begin{eqnarray*}
		 E(P)E_x-(1-E(l)+\mu_\omega)E^2_x=\\ (\mu_{\phi_1}p+\mu_{\phi_2}g)E_x-(\mu_{\phi_1}+\mu_{\phi_2})E^2_x=\\
		 (\mu_{\phi_1}p+\mu_{\phi_2}g)\frac{\mu_{\phi_1}p+\mu_{\phi_2}g}{\mu_{\phi_1}+\mu_{\phi_2}}-\\
		 (\mu_{\phi_1}+\mu_{\phi_2})\frac{(\mu_{\phi_1}p+\mu_{\phi_2}g)^2}{(\mu_{\phi_1}+\mu_{\phi_2})^2} =0
	\end{eqnarray*}
	Hence
	\begin{eqnarray*}
		\rho_i=\mu_l\rho_{i-1}-\mu_\omega \rho_{i-2}
	\end{eqnarray*}	
	that completes the proof. $ \rho_1 $ is calculated by Eq. \ref{Eq:rho_1} and $ \rho_0=1 $.
\end{proof}

According to Lemma \ref{lem:rho_1} and Theorem \ref{thr:rhos}, the correlation between $ x_{t+1} $ and $ x_{t-i+1} $, $ i \ge 0 $, shown by $ \rho_i $, for the recursion in Eq. \ref{Eq:genericPSO} is given by:
\begin{eqnarray}
\label{Eq:correlationGeneric}
\rho_i=
\begin{cases}
1 & i=0\\
\frac{\mu_l}{\mu_\omega + 1} & i=1\\
\mu_l\rho_{i-1}-\mu_\omega \rho_{i-2} & i>1
\end{cases}
\end{eqnarray}

\noindent where $ l=1+\omega-\phi_1-\phi_2 $. This indicates that if $ \rho_1=\rho_2=0 $ then $ \rho_i=0 $ for any $ i>2 $, hence, PSO performs as a pure random search method as there will be no correlation between each step and any of the previous steps. According to Eq. \ref{Eq:rho_1} and \ref{Eq:rho_2}, this takes place only if $ \mu_{\phi_1}+\mu_{\phi_2}=1 $ and $ \mu_\omega=0 $. If we set $ \mu_{\phi_1}+\mu_{\phi_2}=1 $ and $ \mu_\omega=0 $ in Eq. \ref{Eq:genericPSO}, we can observe that $ x_{t+1} $ is just a function of $ \phi_1p_t+\phi_2g_t $ that is a random point generated according to $ p $, $ g $, $ \phi_1 $ and $ \phi_2 $ that, based on the Assumption \ref{ass:genericRandomVariable}, are all random variables. 

For IPSO $ <\omega,c,\alpha> $, the value of $ \rho_i $ for all $ i>0 $ is calculated by:
\begin{eqnarray}\label{Eq:IPSOallRhos}
\rho_i=
\begin{cases}
1 & i=0\\
\frac{\mu_l}{\omega + 1} & i=1\\
\mu_l\rho_{i-1}-\omega \rho_{i-2} & i>1
\end{cases}
\end{eqnarray}
\noindent where $ \mu_l=1+\omega-\frac{c}{2}-\frac{\alpha c}{2} $. This means that IPSO $ <0, 1, 1> $ is the only complete random search as only this setting leads to $ \rho_i = 0 $ for all $ i > 0 $. Any other settings for IPSO lead to a movement pattern in which the position of each particle at each time point is correlated with its position at some other previous time point, formulated by Eq. \ref{Eq:IPSOallRhos}.

In order to test to what extent these findings are in agreement with the actual PSO, we calculated the value of $ \rho_i $ ($ i \in \{1,2,...,20\} $) for IPSO $ <0.73084 ,1.6443,1> $ (the sequence showed by circles in Fig. \ref{fig:IPSOBehaviourTime}) and $ <0.98237,0.19824,1> $ (the sequence showed by squares in Fig. \ref{fig:IPSOBehaviourTime}). We tested two settings to update personal and global best: 1) $ p $ and $ g $ were random variables (uniform in $ [-9,11] $ and $ [-5,15] $, respectively), 2) $ p $ and $ g $ were random walks with $ p_{t+1}=p_t+\frac{r_{1,t}}{t} $ and $ g_{t+1}=g_t+\frac{r_{2,t}}{t} $ where $ r_{1,t} $ and $ r_{2,t} $ are uniform random numbers in $ [-1,1] $. The first setting, $ p $ and $ g $ are random variables, generates most unstructured sequence of values for $ p $ and $ g $, simulating a very rugged search space with many local optima and a swarm of particles. The second setting, $ p $ and $ g $ being random walks with decreasing walk length in iterations, simulates usual optimization scenarios in which $ p $ and $ g $ move more at the early stages of the search while move less at the latter stages within the swarm. Figure \ref{fig:Autocorr} shows the results. It is clear that the calculated correlations (using Eq. \ref{Eq:IPSOallRhos}) and simulated values are very close to one another. Figure \ref{fig:Cor} shows $ \rho_1 $ for different $ \alpha $, $ c $, and $ \omega $ for IPSO settings.

\begin{figure*}[t]
	\centering
	\begin{tabular}{cc}
		\includegraphics[width=0.5\textwidth]{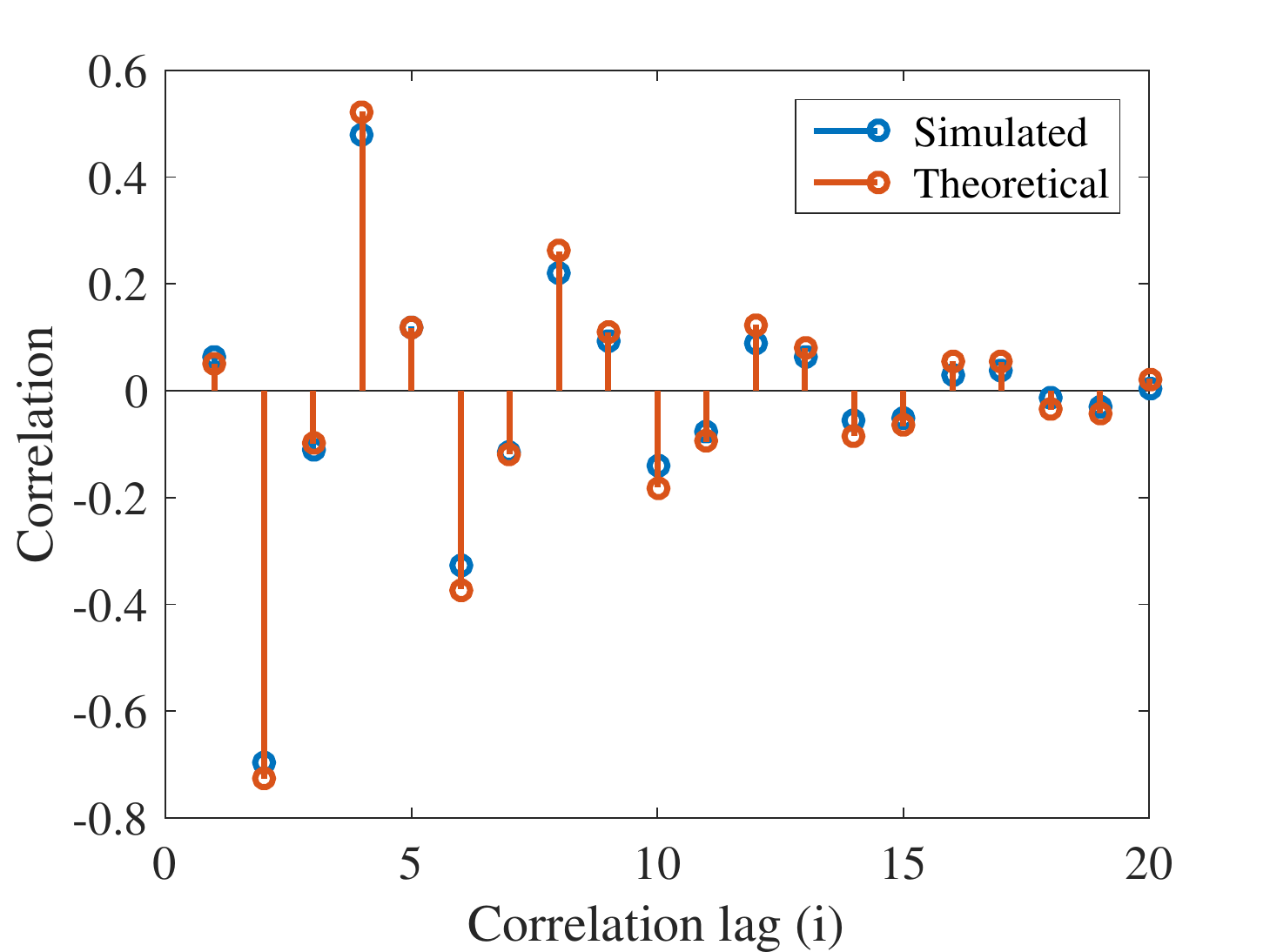} & \includegraphics[width=0.5\textwidth]{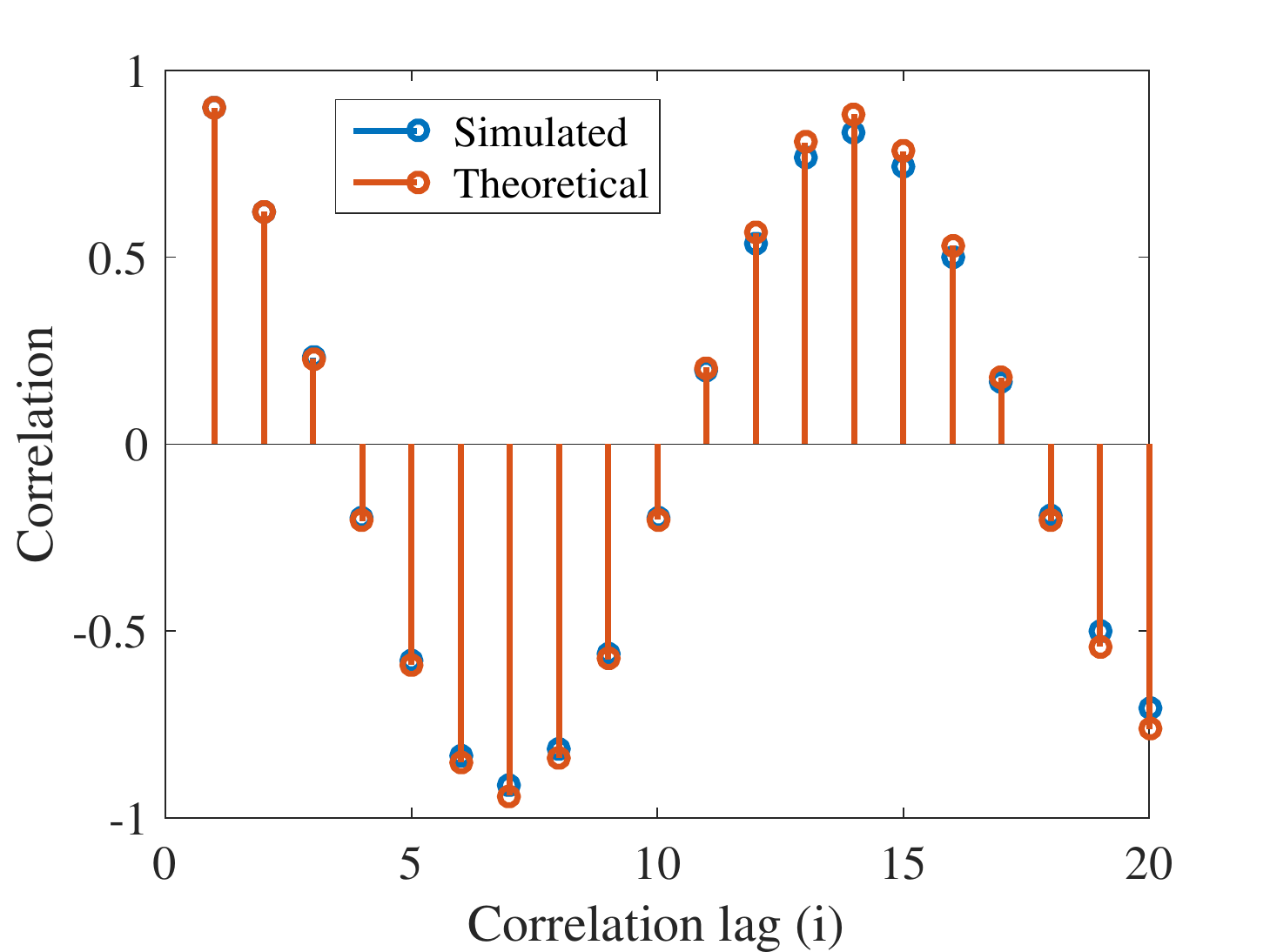} \\
		(a)  & (b) \\
	\end{tabular}
	\caption{Autocorrelation with maximum time lag of 20 calculated both analytically and experimentally for IPSO (a) $ <0.73084,1.6443,1> $ and (b)   $ <0.98237,0.19824,1> $.}
	\label{fig:Autocorr}
\end{figure*}

\begin{figure*}[t]
	\centering
	\begin{tabular}{cc}
		\includegraphics[width=0.5\textwidth]{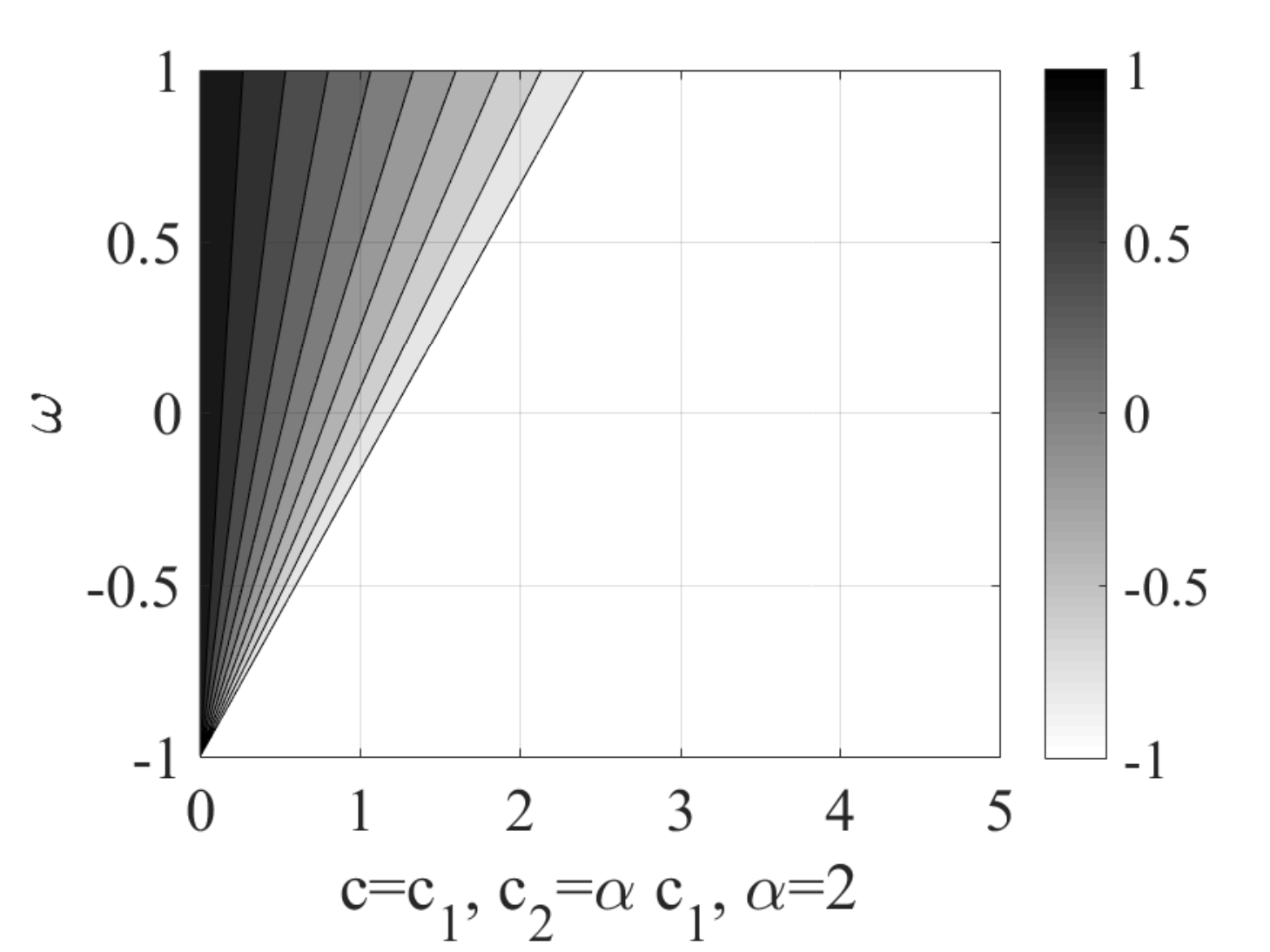} & \includegraphics[width=0.5\textwidth]{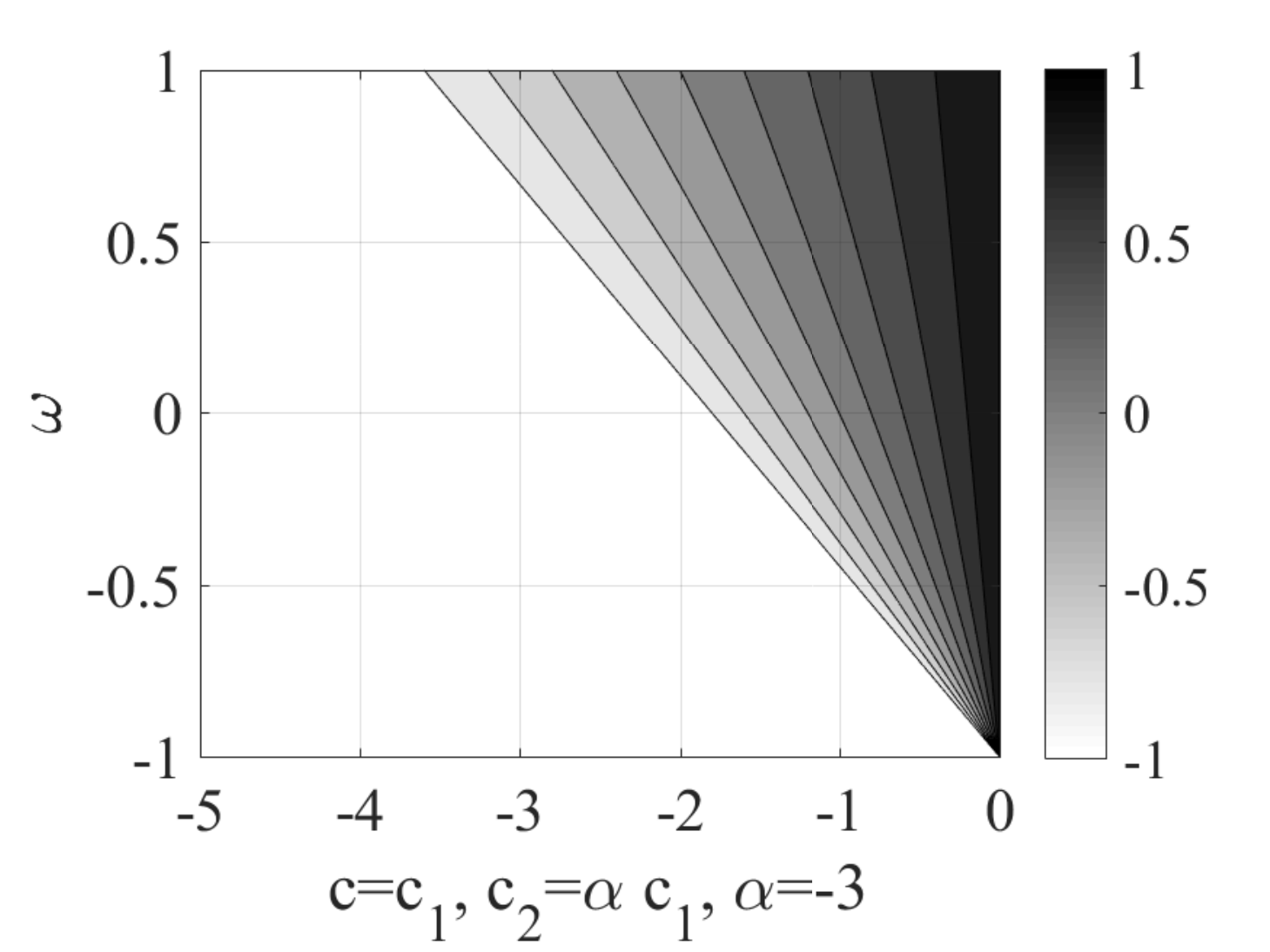} \\
		(a) & (b) \\
	\end{tabular}
	\caption{The value of $ \rho_1 $ when (a) $ \alpha=2 $ and (b) $ \alpha=-3 $.}
	\label{fig:Cor}
\end{figure*}

\subsection{Expected movement distance and expected search range}
\label{sec:variance}
The dependency between positions of a particle in different iterations is formulated by the autocorrelation function, provided in the Theorem \ref{thr:rhos}. This function, however, does not indicate how close the consecutive positions are. In fact, it is possible that the consecutive positions are tightly correlated while they are still far from one another, or they are linearly independent while close to one another. Hence, we investigate the expected distance that a particle moves at each step as another important factor of the movement pattern of the particle. 
\begin{convergenceDefinition}
	The expected movement distance for a particle at iteration $ t+1 $ is formulated by $ E(d_{t+1})=E([x_{t+1}-x_{t}]^2) $ that is equal to $ E(v_{t+1}^2) $. 
\end{convergenceDefinition}

We calculate the expected movement as follows:
\begin{convergenceLemma}
	\label{lem:movementDistance}
	The expectation of movement distance, $ E(d_{t+1}) $ ($ D_x $ for short), at the equilibrium point is formulated by 
	\begin{equation}
	\label{Eq:velocityAmp}
	D_x=E(d_{t+1})=2V_x(1-\rho_1)
	\end{equation}
\end{convergenceLemma}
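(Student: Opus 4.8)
The plan is to expand the squared difference that defines $E(d_{t+1})$ into second moments plus a cross term, and then reduce everything to $V_x$ and $\rho_1$ using the equilibrium conditions established in section \ref{sec:variancemovement} together with Lemma \ref{lem:rho_1}.

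First I would start from the definition $E(d_{t+1})=E([x_{t+1}-x_t]^2)$ and expand the square to obtain
\[
E(d_{t+1})=E(x_{t+1}^2)-2E(x_{t+1}x_t)+E(x_t^2).
\]
Next I would invoke the equilibrium assumption. At the fixed point the moments stabilise, so $E(x_{t+1}^2)=E(x_t^2)$ and $E(x_{t+1})=E(x_t)=E_x$. Writing each second moment as a variance plus a squared mean gives $E(x_t^2)=E(x_{t+1}^2)=V_x+E_x^2$. For the cross term I would use the covariance identity $Cov_{t+1,t}=E(x_{t+1}x_t)-E(x_{t+1})E(x_t)$, so that $E(x_{t+1}x_t)=Cov_{t+1,t}+E_x^2$. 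By the definition of the Pearson correlation together with $\sigma(x_{t+1})\sigma(x_t)=V_x$ at equilibrium, I have $Cov_{t+1,t}=\rho_1 V_x$, which is precisely the intermediate identity already derived inside the proof of Lemma \ref{lem:rho_1}.

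Substituting these three expressions back yields
\[
E(d_{t+1})=(V_x+E_x^2)-2(\rho_1 V_x+E_x^2)+(V_x+E_x^2).
\]
Collecting terms, the three $E_x^2$ contributions cancel exactly, since $E_x^2-2E_x^2+E_x^2=0$, and the variance terms combine to $2V_x-2\rho_1 V_x=2V_x(1-\rho_1)$, which is the claimed identity.

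The calculation is essentially routine bookkeeping; the only point that genuinely needs care — and hence the main (mild) obstacle — is justifying the equilibrium replacements $E(x_{t+1}^2)=E(x_t^2)$ and $E(x_{t+1})=E(x_t)$, which rely on the convergence of the recursion to its fixed point (guaranteed under the conditions in Eq. \ref{Eq:PSOConvergenceConditions}) so that the cancellation of the mean terms is exact rather than approximate. A useful sanity check is that the final expression depends only on $V_x$ and $\rho_1$ and is independent of $E_x$, matching the intuition that the per-step displacement should not depend on where the particle's positions are centred. This also makes the result consistent with Lemma \ref{lem:rho_1}, since $\rho_1$ there is shown to be independent of the moments of $p$ and $g$, so $D_x$ scales purely with $V_x$ once the correlation $\rho_1$ is fixed.
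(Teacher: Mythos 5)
Your proof is correct and follows essentially the same route as the paper's: expand $E([x_{t+1}-x_t]^2)$ into second moments, invoke stationarity at the equilibrium point, and identify the variance $V_x$ and the covariance $Cov_{t+1,t}=\rho_1 V_x$ from Lemma \ref{lem:rho_1}. The only cosmetic difference is that you substitute $E(x_t^2)=V_x+E_x^2$ and let the mean terms cancel, whereas the paper adds and subtracts $E^2(x_{t+1})$ inside the bracket — algebraically the same manipulation.
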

\begin{proof}
	One can simply see that:
	\begin{eqnarray*}
		E(d_{t+1})=E([x_{t+1}-x_{t}]^2)=E(v_{t+1}^2)=\\
		E(x_{t+1}^2)+ E(x_{t}^2)-2E(x_{t}x_{t+1})
	\end{eqnarray*}	
	Because the calculation is performed in the equilibrium point, we can assume that $ E(x_{t+1}^2)= E(x_{t}^2) $, hence
	\begin{eqnarray*}
		E(v_{t+1}^2)=2(E(x_{t+1}^2)-E(x_{t+1}x_{t}))=\\
		2(E(x_{t+1}^2)-E^2(x_{t+1})+E^2(x_{t+1})-E(x_{t+1}x_{t}))=\\
		2(V_x-Cov_1)=2(V_x-V_x\rho_1)=2V_x(1-\rho_1)
	\end{eqnarray*}
	that completes the proof.
\end{proof}

\ignore{
This Lemma shows that the expected movement distance (i.e., the average velocity) is a function of $ V_x $ and $ \rho_1 $. Therefore, it is possible to achieve a small expected movement distance while $ V_x $ is large (i.e., a large $ V_x $ with a large $ \rho_1 $). This is correspondence to a movement pattern in which the consecutive positions are close to one another while the area the particle covers during the search is large. 
}

Because $ D_x $ is a function of $ V_x $ and $ \rho_1 $, we investigate $ V_x $ in the rest of this section instead of $ D_x $. Let us first describe the conceptual relation between $ V_x $ and the movement pattern.

A particle in PSO oscillates around the fixed point of its expectation ($ E_x=\frac{\mu_{\phi_1}\mu_p+\mu_{\phi_2}\mu_g}{\mu_{\phi_1}+\mu_{\phi_2}} $) during iterations. This oscillation enables the particle to reach new points in the search space and test if they are better than what has been found so far. The expectation of the distance that the position of a particle moves away from its fixed point (called the \textit{expected search range}) can be formulated by $ E[(x_t-E_x)^2] $ that is equal to the variance of the positions, $ V(x) $. When $ t $ is large, this expected search range can be characterized by the fixed point of the variance of the particle positions, $ V_x $. In other words, $ V_x $ is a measure of the range the particle "covers" during a large number of iterations. One interpretation for the expected search range is the ability of the particle to perform local or global search, i.e., the smaller the expected search range is, the better the particle performs a local search. 

The variance of positions generated by Eq. \ref{Eq:genericPSO} has been investigated in details by \cite{Bonyadi2015Stagnation} for IPSO but only when $ c=c_1=c_2 $, that is equivalent to IPSO $ < \omega , c, 1> $. We investigate the expected search range for IPSO defined by Definition \ref{def:IPSO} for an arbitrary $ < \omega , c, \alpha> $ that is a more generic setting.

The variance fixed point, $ V_x $, is calculated by Eq. \ref{Eq:varianceFixPoint}. For IPSO, $ k_1 $, $ k_2 $, $ k_3 $, and $ k_4 $ are written as:
\begin{itemize}
	\item $ k_1=\frac{c^2}{4}(1 + \alpha)^2 $
	\item $ k_2=k_1(1-\omega)+c(1+\alpha)(\omega^2-1)+\frac{c^2}{12}(1+\alpha^2)(\omega+1) $,
	\item $ k_3=k_1\frac{c^2}{3}(\sigma_p^2 + \alpha^2\sigma_g^2) $,
	\item $ k_4=\frac{\alpha^2c^4}{24}(\mu_g - \mu_p)^2 $.
\end{itemize}

According to \cite{Bonyadi2015Stagnation}, in order to guarantee the convergence of variance for IPSO $ <\omega,c,\alpha> $, the values of $ c $, $ \omega $, and $ \alpha $ need to guarantee conditions in Eq. \ref{Eq:PSOConvergenceConditions}. Satisfaction of Condition 2 in that equation for IPSO settings (i.e., $ c(1+\alpha) > 0 $) entails $ \alpha \ne -1 $. Also, $ c(1+\alpha) > 0 $ imposes that $ c $ and $ 1+\alpha $ always have the same sign, hence, $ c<0 $ if $ \alpha<-1 $ and $ c>0 $ if $ \alpha >-1 $.

We introduce $ V_c $ for IPSO, defined by $ V_x= \gamma V_c $, where
\begin{equation}
\gamma=[2(\alpha+1)^2[\sigma_p^2+\alpha^2\sigma_g^2]+\alpha^2(\mu_p-\mu_g)^2]
\label{Eq:gamma}
\end{equation}
and
\begin{equation}
V_c=\frac{-c(\omega + 1)}{c(m_2 - m_1\omega)+(\alpha+1)^3(6\omega^2 - 6)}
\label{Eq:IPSOCV1}
\end{equation}
\noindent where $ m_1=(\alpha + 1)^2(\alpha^2 + 3\alpha + 1) $ and $ m_2=(\alpha + 1)^2(2\alpha^2 + 3\alpha + 2) $. $ V_c $ was also introduced in \cite{Bonyadi2015Stagnation} but only for IPSO $ <\omega,c,1> $. The main difference between $ V_x $ and $ V_c $ is that $ V_c $ is independent of the position and variance of $ p $ and $ g $. Hence, one can set the range of the search through changing the value of $ V_c $ during the run. Of course the reflection of this change in $ V_x $ also depends on $ \gamma $. If this term is zero ($ \sigma_p^2=\sigma_g^2=0 $ and $ \mu_p=\mu_g $) then $ V_x $ is zero (particle stops moving) no matter the value of $ V_c $. However, while this term is non-zero, the variance of particles positions can be controlled by $ V_c $.

To achieve a desired $ V_c $, one can solve Eq. \ref{Eq:IPSOCV1} for $ c $ as: 
\begin{equation}
c=\frac{-6V_c(\alpha + 1)^3(\omega^2 - 1)}{V_c(m_2 - m_1\omega) + (\omega + 1)}
\end{equation}
We use this equation in section \ref{sec:searchability} to show how to control the movement patterns of particles in IPSO. Note that this is equivalent to Eq. 22 in \cite{BonyadiMovementPattern2017} when $ \alpha=1 $. 

\subsection{Focus of the search}
\label{sec:focussearch}
The focus of the search indicates to what extent the particle should concentrate the search around the best known solutions to the swarm, $ p $ and $ g $. We measure the search concentration around a point $ o $ (that is a random variable) as $ (E(x_t)-E(o))^2 $, where  $ x_t $s are the generated positions by the search algorithm. Hence, the concentration of the search around $ p $ and $ g $ can be defined by the average distance between $ p $ or $ g $ and the average of the positions. We introduce Theorem \ref{thr:focus} to enable controlling this focus.

\begin{convergenceTheorem}
	\label{thr:focus}
	We define $ F=(\frac{\mu_{\phi_2}}{\mu_{\phi_1}})^2 $. The particle positions are more concentrated around $ g_t $ if $ F>1 $ and more concentrated around $ p_t $ otherwise.
\end{convergenceTheorem}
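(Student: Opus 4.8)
The plan is to evaluate the concentration measure $(E(x_t)-E(o))^2$ at the equilibrium point for the two candidate centers $o=p$ and $o=g$, and then compare the two resulting squared distances. At the fixed point, $E(x_t)$ equals $E_x$, which by Eq. \ref{Eq:expectationEquilibrium} is the convex combination $E_x=\frac{\mu_{\phi_1}\mu_p+\mu_{\phi_2}\mu_g}{\mu_{\phi_1}+\mu_{\phi_2}}$. Because the convergence conditions (Eq. \ref{Eq:expectationConvergence}) guarantee $\mu_{\phi_1}+\mu_{\phi_2}>0$, this fixed point is well defined and the denominators appearing below are strictly positive.

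First I would substitute $E_x$ into $(E_x-\mu_p)^2$. Since $E_x$ is a weighted average of $\mu_p$ and $\mu_g$, the numerator telescopes: $E_x-\mu_p=\frac{\mu_{\phi_2}(\mu_g-\mu_p)}{\mu_{\phi_1}+\mu_{\phi_2}}$, so the concentration around $p$ equals $\frac{\mu_{\phi_2}^2(\mu_g-\mu_p)^2}{(\mu_{\phi_1}+\mu_{\phi_2})^2}$. Symmetrically, $E_x-\mu_g=\frac{\mu_{\phi_1}(\mu_p-\mu_g)}{\mu_{\phi_1}+\mu_{\phi_2}}$, which gives the concentration around $g$ as $\frac{\mu_{\phi_1}^2(\mu_g-\mu_p)^2}{(\mu_{\phi_1}+\mu_{\phi_2})^2}$.

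Next I would compare the two expressions. ``More concentrated around $g_t$'' means the squared distance to $g$ is the smaller one, i.e. $(E_x-\mu_g)^2<(E_x-\mu_p)^2$. The common factor $\frac{(\mu_g-\mu_p)^2}{(\mu_{\phi_1}+\mu_{\phi_2})^2}$ is nonnegative and cancels (assuming $\mu_p\neq\mu_g$; otherwise both distances vanish and the question of focus is vacuous), so the inequality reduces to $\mu_{\phi_1}^2<\mu_{\phi_2}^2$. Dividing by $\mu_{\phi_1}^2$ yields $\left(\frac{\mu_{\phi_2}}{\mu_{\phi_1}}\right)^2>1$, which is exactly $F>1$; the reverse inequality gives focus on $p$, establishing the claim.

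There is essentially no hard step here: the only care needed is to work at the equilibrium so that $E(x_t)=E_x$, to note that squares make the result insensitive to the signs of $\mu_{\phi_1},\mu_{\phi_2}$ (relevant since $\alpha$ may be negative), and to handle the degenerate case $\mu_p=\mu_g$ separately. The point worth emphasizing is \emph{why} the squared ratio $F$ is the right quantity: because $E_x$ partitions the gap $\mu_g-\mu_p$ in proportions $\mu_{\phi_2}:\mu_{\phi_1}$, the distance to $p$ picks up the weight $\mu_{\phi_2}$ and the distance to $g$ the weight $\mu_{\phi_1}$, so the comparison is governed entirely by $\mu_{\phi_1}^2$ versus $\mu_{\phi_2}^2$ and is independent of the actual locations $\mu_p,\mu_g$ (as long as they differ) and of all second-moment parameters.
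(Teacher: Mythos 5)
Your proposal is correct and follows essentially the same route as the paper: both evaluate the equilibrium expectation $E_x$ from Eq.~\ref{Eq:expectationEquilibrium}, form the two squared distances $(E_x-\mu_p)^2$ and $(E_x-\mu_g)^2$, and observe that their comparison reduces to $\left(\frac{\mu_{\phi_2}}{\mu_{\phi_1}}\right)^2$ versus $1$ after the common factor $(\mu_g-\mu_p)^2/(\mu_{\phi_1}+\mu_{\phi_2})^2$ cancels. Your explicit treatment of the degenerate case $\mu_p=\mu_g$ and the remark that the conclusion is sign-insensitive are small refinements the paper leaves implicit, but they do not change the argument.
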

\begin{proof}
The focus $ F $ can be formulated by the closeness of the expectation of movement to the expectation of $ p $ and $ g $, i.e. $ (E(x_t)-\mu_p)^2 $ and $ (E(x_t)-\mu_g)^2 $. In a long run, $ E(x_t) $ can be replaced by $ E_x $. We define $ F $, the focus measure, when $ t $ grows as 
\begin{equation}
F=\frac{(E_x-\mu_p)^2}{(E_x-\mu_g)^2}=\frac{(\frac{\mu_{\phi_1}\mu_p+\mu_{\phi_2}\mu_g}{\mu_{\phi_1}+\mu_{\phi_2}}-\mu_p)^2}{(\frac{\mu_{\phi_1}\mu_p+\mu_{\phi_2}\mu_g}{\mu_{\phi_1}+\mu_{\phi_2}}-\mu_g)^2}=
(\frac{\mu_{\phi_2}}{\mu_{\phi_1}})^2
\label{eq:searchfocus}
\end{equation}

If $ F>1 $ then the distance between $ E_x $ and $ \mu_g $ is smaller than the distance between $ E_x $ and $ \mu_p $, hence, the search is more concentrated around $ \mu_g $. If $ F<1 $ then the search is more concentrated around $ \mu_p $. For $ F=1 $ the search is balanced between $ p $ and $ g $.
\end{proof}

The measure $ F $ is independent of the inertia weight and distribution of $ p $ and $ g $ and it is only dependent on acceleration coefficients. For IPSO $ F=\alpha^2 $, i.e. $ |\alpha|>1 $ enforces the particle to focus the search more around the global best, as expected.

\subsection{Controlling movement patterns}
\label{sec:searchability}
The values of $ V_x $, $ \rho_i $ (for all $ i $), and $ F $ determine the pattern of movement in PSO. In order to achieve a desired pattern of movement, described by $ V_x $, $ \rho_i $, and $ F $, one can solve a system of equations that involves Eq. \ref{Eq:correlationGeneric} and \ref{Eq:varianceFixPoint} and \ref{eq:searchfocus}. Solving this system of equations is, however, not possible as it involves many unknown variables ($ \mu_\omega $, $ \mu_{\phi_1} $, $ \mu_{\phi_2} $, $ \sigma_{\phi_1} $, $ \sigma_{\phi_2} $, and $ \sigma_\omega $). Nevertheless, for IPSO and to guarantee achieving only $ \rho_1 $ and $ V_c $, this system of equations is simplified as follows:
\begin{eqnarray}
	\begin{cases}
		c & =\frac{2(1-\rho_1)(\omega+1)}{\alpha+1}\\
		c & =\frac{-6V_c(\alpha + 1)^3(\omega^2 - 1)}{V_c(m_2 - m_1\omega) + (\omega + 1)}
	\end{cases}
	\label{Eq:systemFC}
\end{eqnarray}
\noindent where  $ m_1=(\alpha + 1)^2(\alpha^2 + 3\alpha + 1) $ and $ m_2=(\alpha + 1)^2(2\alpha^2 + 3\alpha + 2) $. Theorem \ref{Thr:solvingtheSystemOfEq} is used to find the solutions for this system of equations \footnote{Note that $ V_x $ has been replaced by $ V_c $ in Eq. \ref{Eq:systemFC} so that the system of equations become independent of $ p $ and $ g $ and their changes. However, if it is desired to achieve a given $ V_x $ rather than $ V_c $, one can simply calculate a corresponding $ V_c $ for the given $ V_x $ (recall that $ V_x=\gamma V_c $) and then achieve the found $ V_c $. Of course this calculation involves finding the value for $ \gamma $.}. We first focus on finding $ \omega $ and $ c $ and then we provide details on finding proper $ \alpha $.

\begin{convergenceTheorem}
	\label{Thr:solvingtheSystemOfEq}
	For any $ V_c> 0 $ and $ \rho_1 \in (-1, 1) $, there exists a feasible solution $ (\omega_0, c_0) $ for the system of equations in Eq. \ref{Eq:systemFC} that guarantee convergence of variance (Eq. \ref{Eq:PSOConvergenceConditions}) where: 	
	\begin{eqnarray}
	\label{Eq:calculateOmegaC}
	(\omega_0,c_0)=\left(\frac{m_1V_c+m_2\rho_1 V_c+\rho_1-1}{m_2V_c+m_1 \rho_1 V_c-\rho_1+1},\right.\\
	 \left.\frac{2(1-\rho_1)(\omega_0+1)}{\alpha+1}\right)\nonumber
	\end{eqnarray}
\end{convergenceTheorem}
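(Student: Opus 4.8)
The plan is to treat $\alpha$ (hence $m_1$ and $m_2$) as fixed with $\alpha\neq-1$, solve the system in Eq. \ref{Eq:systemFC} for $\omega$, and then verify that the resulting pair satisfies the three convergence conditions in Eq. \ref{Eq:IPSOConvergenceConditions}; this proves existence constructively. First I would equate the two right-hand sides of Eq. \ref{Eq:systemFC}. Writing $\omega^2-1=(\omega+1)(\omega-1)$ and cancelling the common factor $\omega+1$ (legitimate since any feasible solution must have $\omega>-1$), the equation becomes linear in $\omega$ after cross-multiplication. The single algebraic fact that makes everything collapse is the identity $m_1+m_2=3(\alpha+1)^4$, which follows directly from the definitions of $m_1$ and $m_2$; it lets me rewrite the stray $6V_c(\alpha+1)^4$ term produced by cross-multiplication as $2V_c(m_1+m_2)$, after which the $\alpha$-quartic disappears. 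Collecting the coefficient of $\omega$ and the constant term then yields exactly the $\omega_0$ claimed in Eq. \ref{Eq:calculateOmegaC}, and $c_0$ is obtained by substituting $\omega_0$ into the first equation of Eq. \ref{Eq:systemFC}.

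The bulk of the remaining work, and the part I expect to be the real obstacle, is verifying feasibility. Writing $\omega_0=N/D$ with $N=m_1V_c+m_2\rho_1V_c+\rho_1-1$ and $D=m_2V_c+m_1\rho_1V_c-\rho_1+1$, I would avoid a direct sign analysis of $D$ (awkward because $m_1$ can be negative for $\alpha$ in a certain interval) and instead compute the two combinations
\[ D-N=(1-\rho_1)\big[(m_2-m_1)V_c+2\big],\qquad D+N=(m_1+m_2)V_c(1+\rho_1). \]
Using $m_2-m_1=(\alpha+1)^2(\alpha^2+1)>0$, $m_1+m_2=3(\alpha+1)^4>0$, $V_c>0$ and $\rho_1\in(-1,1)$, both quantities are strictly positive; adding them gives $2D>0$, hence $D>0$. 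Then $D-N>0$ yields $\omega_0<1$ and $D+N>0$ yields $\omega_0>-1$, which is Condition~1.

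Condition~2 follows almost immediately from the first equation of Eq. \ref{Eq:systemFC}: since $c_0(\alpha+1)=2(1-\rho_1)(\omega_0+1)$ with $\omega_0+1>0$ and $\rho_1<1$, the product $c_0(\alpha+1)$ is positive (so in particular $c_0\neq0$), while the upper bound $c_0(\alpha+1)<4(1+\omega_0)$ reduces to $1-\rho_1<2$, i.e. $\rho_1>-1$; both hold. For Condition~3 I would not expand $k_2$ by hand but instead exploit the identity $V_c=-\dfrac{c^2(\omega+1)}{6(\alpha+1)^2k_2}$, obtained by inserting the IPSO expressions for $k_1,k_3,k_4$ into Eq. \ref{Eq:varianceFixPoint} and using $k_3+k_4=\tfrac{c^4}{24}\gamma$ with $\gamma$ from Eq. \ref{Eq:gamma}; checking that this matches the definition of $V_c$ in Eq. \ref{Eq:IPSOCV1} reduces to the elementary polynomial identities equivalent to $m_1+m_2=3(\alpha+1)^4$ and $m_2-m_1=(\alpha+1)^2(\alpha^2+1)$. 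Because $V_c>0$, $c_0^2>0$, $(\alpha+1)^2>0$ and $\omega_0+1>0$, rearranging forces $k_2<0$, which is Condition~3 and completes the verification.
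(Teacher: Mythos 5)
Your proposal is correct, and in fact it proves more of the theorem's claim than the paper's own proof does. The solving step is essentially shared: the paper simply asserts that $h_{\rho_1,V_c}(\omega)=0$ has roots $\omega_1=-1$ (discarded because it forces $c=0$) and $\omega_0$ as in Eq. \ref{Eq:calculateOmegaC}, which is exactly what your cancellation of $(\omega+1)$ followed by the collapse via $m_1+m_2=3(\alpha+1)^4$ produces explicitly. Where you genuinely diverge is the feasibility analysis. The paper only shows that the denominator $D=m_2V_c+m_1\rho_1V_c-\rho_1+1$ is nonzero, by arguing that $D=0$ would force $\rho_1=-(m_2V_c+1)/(m_1V_c-1)$, an increasing function of $V_c$ equal to $1$ at $V_c=0$ and hence outside $(-1,1)$; it then declares $c_0$ achievable and stops, never verifying Conditions 1--3 of Eq. \ref{Eq:PSOConvergenceConditions}, even though the theorem statement promises that the solution guarantees variance convergence. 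Your decomposition $D-N=(1-\rho_1)\left[(m_2-m_1)V_c+2\right]>0$ and $D+N=(m_1+m_2)V_c(1+\rho_1)>0$ is both stronger (it yields $D>0$, not merely $D\neq 0$) and more productive, since it immediately delivers Condition 1, $-1<\omega_0<1$; your reduction of Condition 2 to $\rho_1\in(-1,1)$, and of Condition 3 to the sign of $k_2$ via the identity $V_c=-c^2(\omega+1)/\left(6(\alpha+1)^2k_2\right)$ (which I checked: it is a valid rational identity, equivalent to the two facts $m_1+m_2=3(\alpha+1)^4$ and $m_2-m_1=(\alpha+1)^2(\alpha^2+1)$ you invoke), then closes the remaining gap. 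In short, the paper's argument buys brevity but establishes only that $(\omega_0,c_0)$ is a finite, well-defined pair, and even there its monotonicity argument quietly ignores the pole of $-(m_2V_c+1)/(m_1V_c-1)$ at $V_c=1/m_1$ when $m_1>0$; your argument costs some sign bookkeeping but delivers the full ``guarantees convergence of variance'' clause of the theorem, and your $D\pm N$ computation sidesteps the pole issue entirely.
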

\begin{proof}
We introduce $ h_{\rho_1, V_c}(\omega) $ as follows:
\begin{eqnarray*}
\label{Eq:findingOmega}
h_{\rho_1, V_c}(\omega)=\frac{-6V_c(\alpha + 1)^3(\omega^2 - 1)}{V_c(m_2 - m_1\omega) + (\omega + 1)}-\frac{2(1-\rho_1)(\omega+1)}{\alpha+1}
\end{eqnarray*} 

The value of $ \omega $ in Eq. \ref{Eq:systemFC} can be calculated by solving $ h_{\rho, V_c}(\omega)=0 $. The roots of $ h_{\rho, V_c}(\omega) $ are $ \omega_0=\frac{m_1V_c+m_2\rho_1 V_c+\rho_1-1}{m_2V_c+m_1 \rho_1 V_c-\rho_1+1} $ and $ \omega_1=-1 $. As $ w_1=-1 $ leads to $ c=0 $ for any $ V_c $ and $ \rho_1 $ (to ensure first order stability, Eq. \ref{Eq:expectationConvergence}), we continue with $ \omega_0 $. $ \omega_0 $ leads us to a feasible solution for the system of equations in Eq. \ref{Eq:systemFC} if and only if its denominator is non-zero, i.e. $ m_2V_c+m_1 \rho_1 V_c-\rho_1+1\ne0 $. The value of $ m_2V_c+m_1 \rho_1 V_c-\rho_1+1 $ is zero if and only if $ \rho_1=-(m_2V_c + 1)/(m_1V_c - 1) $. The first derivative for this equation as a function of $ V_c $ is $ (m_1+m_2)/(m_1V_c - 1)^2 $ that is always positive that means $ -(m_2V_c + 1)/(m_1V_c - 1) $ is an increasing function of $ V_c $. Hence, as the value of this function for $ V_c=0 $ is 1, $ m_2V_c+m_1 \rho_1 V_c-\rho_1+1\ne0 $ for any $ V_c>0 $. This means that, for any $ V_c>0 $ and $ \rho_1 \in (-1, 1) $ and $ \alpha \in \mathbb{R} $, $ \omega_0 $ is a finite real value. 

The value of $ c_0 $ is calculated by substituting $ \omega_0 $ in the second equation in the system of equations:
\begin{equation*}
c_0=\frac{2(1-\rho_1)(\omega_0+1)}{\alpha+1}
\end{equation*}
Clearly, the denominator of $ c_0 $ is also non-zero for any $ V_c> 0 $ and $ \rho_1 \in (-1, 1) $ and $ \alpha \ne -1 $, hence, $ c_0 $ is always achievable.
\end{proof}

This theorem enables finding $ c_0 $ and $ \omega_0 $ to guarantee achieving a given $ \rho_1 $ and $ V_c $, hence, $ D_c $. The values for $ c_0 $ and $ \omega_0 $ are, however, dependent on $ \alpha $, a value that could be used to balance the focus of the search, $ F $. 

Fig. \ref{fig:varianceFreq} shows the calculation of $ \omega_0 $ and $ c_0 $ for four different correlations ($ \rho_1 \in \{-0.8, -0.1, 0.1, 0.8\} $) and variance coefficients ($ V_c \in \{0.1, 8.0, 0.15, 3.0\} $) when $ \alpha=1 $ (i.e., $ c_1=c_2 $). For each case, the curves corresponding to $ \rho_1 $ and $ V_c $ have been shown (in $ c $ vs $ \omega $ space). The crossing points of these curves are solutions to the system of equations presented in Eq. \ref{Eq:systemFC}.
\begin{figure}
	\centering
	\includegraphics[clip,width=0.5\textwidth]{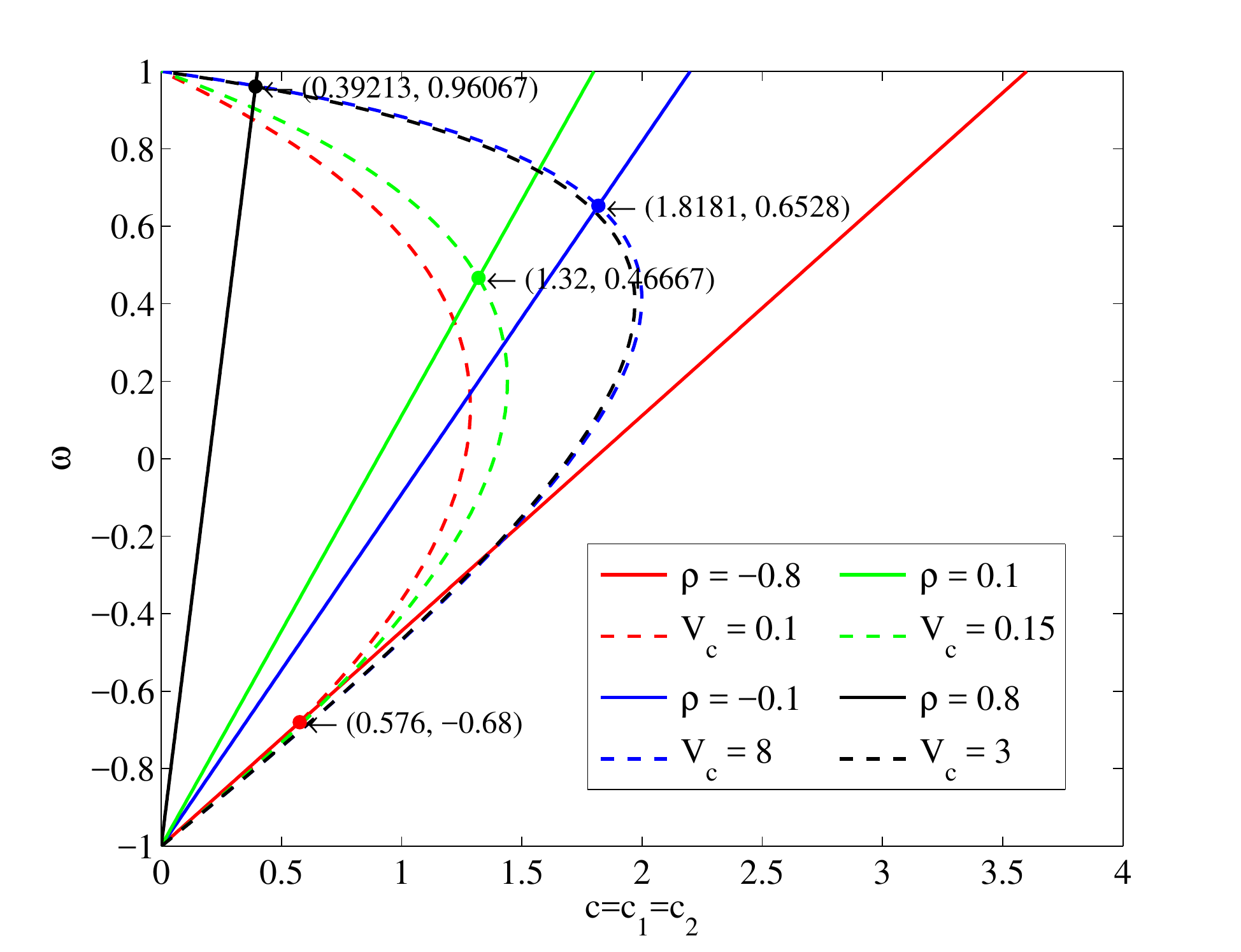}
	\caption{The solution to the system of equations for different values of $ V_c $ and $ \rho $, $ (V_c, \rho)=\{(0.1,-0.8), (8,-0.1), (0.15,0.1), (3,0.8)\} $.}
	\label{fig:varianceFreq}
\end{figure}

One can use $ \rho_1 $, $ F $, and $ V_c $ to set the movement pattern of a particle. Hence, if it is known which pattern of movement is more efficient to optimize the problem at hand\footnote{Determining appropriate values for variance and correlation to optimize the search efficiency is out of the scope of this article. Such decisions need to consider different factors such as the landscape type, difference from desired objective value, rate of improvements, etc. Once these are known, a decision inference system can determine whether a larger/smaller variance and correlation are needed. Interested readers are referred to \cite{malan2014characterising} for further information on this topic.} then Theorems \ref{thr:focus} and \ref{Thr:solvingtheSystemOfEq} could be used to calculate proper coefficients to achieve that pattern. The procedure proposed in \cite{BonyadiMovementPattern2017} could be used for a similar purpose, however, it involved solving a degree-4 polynomial to calculate $ \omega $ and $ c $ to achieve a given variance and base frequency (that is corresponding to $ \rho_1 $). Corresponding procedure proposed in this paper to achieve a given variance and $ \rho_1 $ does not apply any computational overhead to the main IPSO calculations as it is done in $ O(1) $. Also, the procedure proposed in \cite{BonyadiMovementPattern2017} was limited to $ c=c_1=c_2 $ and $ \omega > 0 $ while our proposed procedure supports any feasible value for these coefficients. 

\subsection{A time-adaptive PSO}
We propose and validate a new time-adaptive PSO based on the movement patterns analysis conducted in this paper, called the movement pattern adaptation PSO (MAPSO). We use the following key observations:
\begin{itemize}
	\item A large $ V_c $ and a small $ |\rho_1| $ is preferable at the early stages to enable the particle to search on a large range (large $ V_c $) and not towards any particular direction (small $ |\rho_1| $). $ F=1 $ would balance between $ p $ and $ g $, that would be a good choice at the beginning.
	\item As the iterations grow a larger $ |\rho_1| $ is beneficial to maintain good directions found by the particles. A large $ V_c $ is still beneficial as the exploration could be still helpful. 
	\item Later stages of the search would be better to focus on best found solutions (larger $ F $), around the best known solutions (smaller $ |\rho_1| $ and smaller $ V_c $).
\end{itemize}

Note that none of these patterns can be achieved by only changing the value of $ \omega $, but by changing all coefficients ($ \omega $, $ c $, and $ \alpha $) at the same time. We test this idea in the next section. Based on this setting, the values of $ V_c $, $ \rho_1 $, and $ F $ through:

\begin{equation*}
V_{c}^{(t)}=
\begin{cases}
V_{max} & t<t_1\\
\frac{(t-t_1)(V_{min}-V_{max})}{t_2-t_1}+V_{max} & t_1<t<t_2\\
V_{min} & t>t_2
\end{cases}
\end{equation*}
where $ V_{max}=25 $, $ V_{min}=5 $. This function ensures that the value of $ V_c $ is largest at the earlier and smallest at the later stages of the search, while linearly decreasing from iteration $ t_1 $ to $ t_2 $. 
\begin{equation*}
\rho_{1}^{(t)}=
\begin{cases}
\rho_{min} & t<t_1\\
\frac{(t-t_1)(\rho_{max}-\rho_{min})}{(t_2-t_1)/2-t_1}+\rho_{min} & t_1<t<(t_2-t_1)/2\\
\frac{(t-t_1)(\rho_{min}-\rho_{max})}{t_2-(t_2-t_1)/2}+\rho_{max} & (t_2-t_1)/2<t<t_2\\
\rho_{min} & t>t_2
\end{cases}
\end{equation*}
where $ \rho_{max}=0.8 $, $ \rho_{min}=0.1 $. The main rationale is to set $ \rho_1 $ to a small value at the early stages. The value is then grows until the mid stage of the search and starts declining afterwards. The equation ensures that the value of $ \rho_1 $ is small at the later stages of the search.
\begin{equation*}
F^{(t)}=
\begin{cases}
F_{min} & t<t_1\\
1 & t_1<t<t_2\\
F_{max} & t>t_2
\end{cases}
\end{equation*}
where $ F_{max}=25 $, $ F_{min}=.25 $. This equation ensures more concentration around the personal best, then a balanced search around both personal and global best, and finally focus the search around the global best. In all equations, we set $ t_1=\frac{t_{max}}{5} $, $ t_2=\frac{4t_{max}}{5} $. All of these values were set through some experiments on a very limited number of standard optimization functions\footnote{The aim of this article is not to find the best values for the coefficients to design yet another adaptive PSO to beat other existing PSO. The main aim of this article is to theoretically support considerations that need to be factored in for any adaptive PSO to be designed. Hence, the parameter setting here conducted in a very superficial level to show even such suboptimal settings can lead to good results.}. Figure \ref{fig:VandRhochanges} ($ t_{max}=10,000 $) demonstrates the value of coefficients in time according to these settings.
\begin{figure}
	\begin{tabular}{c}
		\includegraphics[width=0.45\textwidth]{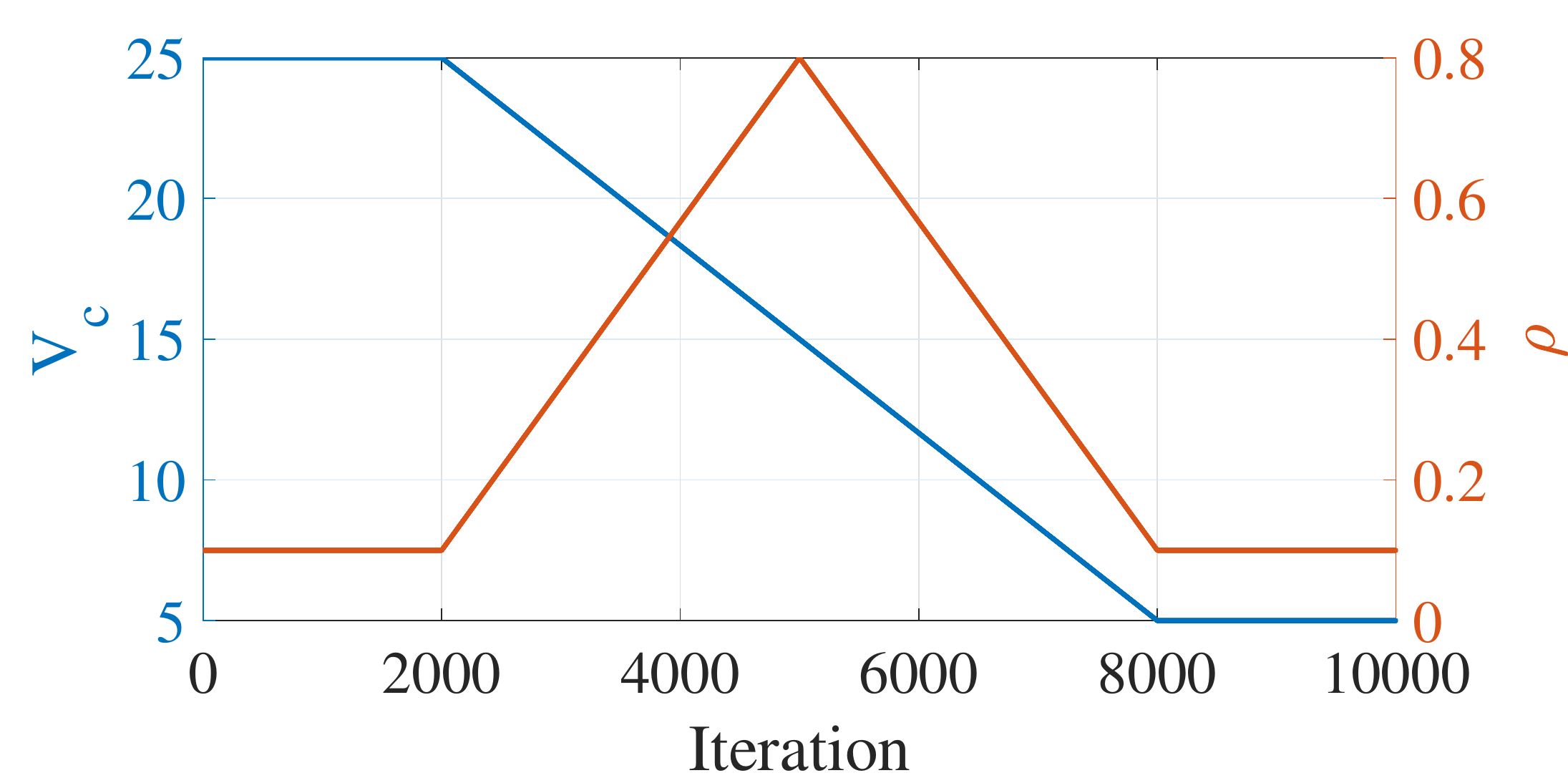}  \\
		(a) \\
		\includegraphics[width=0.45\textwidth]{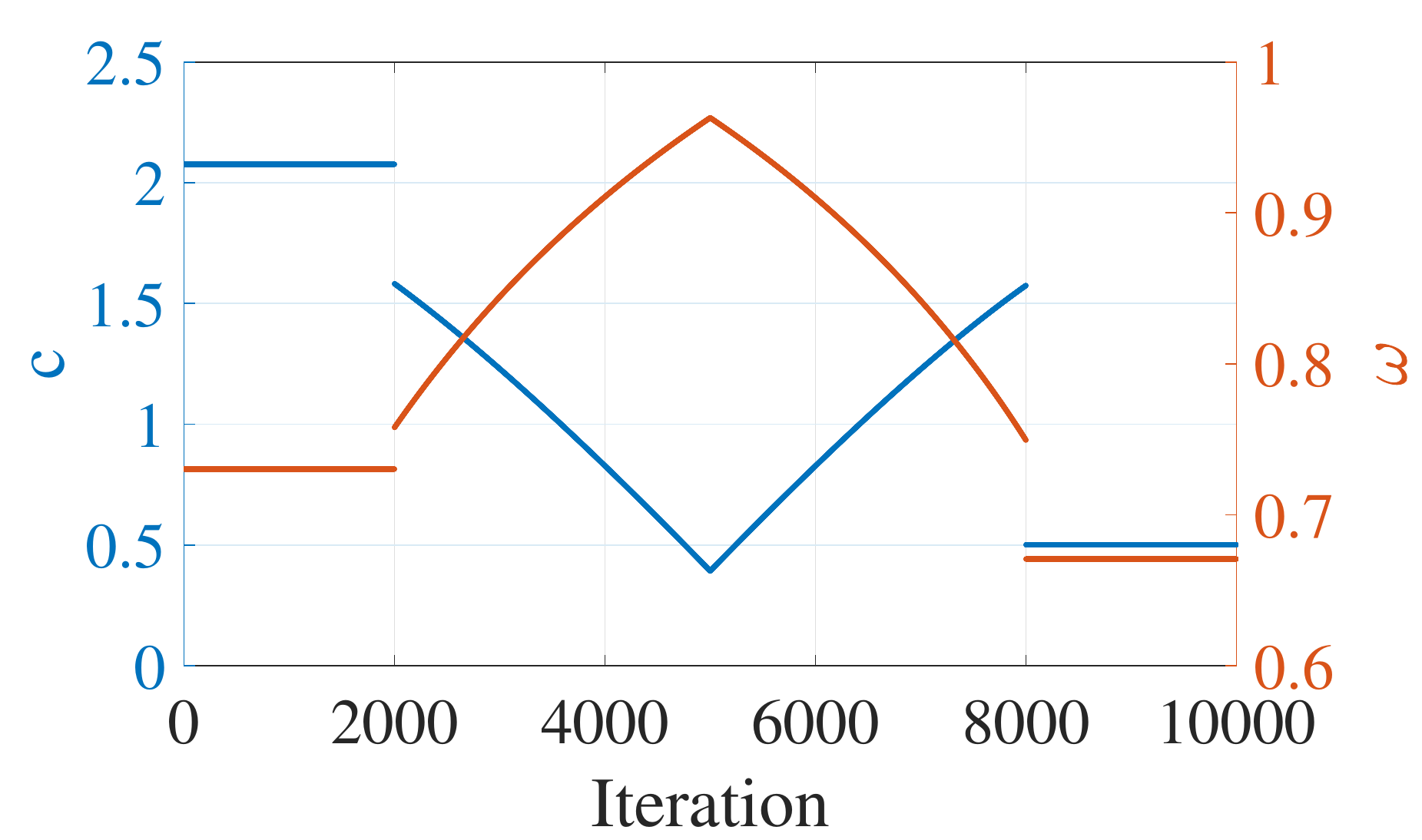} \\
		(b) \\
	\end{tabular}
	\caption{(a) Changes in movement patterns factors during iterations, where $ t_{max} $ was set to 10,000. (b) Corresponding values for $ \omega $ and $ c $.}
	\label{fig:VandRhochanges}
\end{figure}

\section{Experiments and comparisons}
In this section we compare 14 algorithms, 2 with constant coefficients, 12 with adaptive or time-adaptive coefficients, against a time adaptive approach based on analyses conducted in this paper. The methods for comparison are 
\begin{itemize}
	\item Top 9 methods in \cite{harrison2016inertia} that are: Constriction coefficient PSO (CCPSO) \cite{Clerc02Explo}, Linear decreasing inertia weight PSO (LDWPSO) \cite{Shi98Parameter}, Random inertia weight PSO (RWPSO) \cite{Eberhart01Tracking}, Chaotic descending inertia weight PSO \cite{feng2007chaotic}, sugeno inertia weight PSO \cite{lei2006new}, logarithm decreasing PSO \cite{gao2008particle}, self-regulating PSO \cite{tanweer2015self}, adaptive inertia weight PSO (AIWPSO) \cite{Nickabadi11NovelPSO}, adaptive velocity information PSO (AVIPSO) \cite{xu2013adaptive}. 
    \item Constant coefficients showed to be better than others in \cite{BonyadiMovementPattern2017}, inertia constant PSO (ICPSO).
	\item Linear increasing inertia weight PSO (LIWPSO) \cite{Zheng03EmpericalPSO}, 
	\item Decreasing inertia weight PSO (DWPSO) \cite{fan2007decreasing}, nonlinear improved inertia weight PSO (NLIPSO) \cite{jiao2008dynamic}, and nonlinear inertia weight PSO (NLPSO) \cite{yang2015low}.
	\item Proposed method (MAPSO).
\end{itemize}

\subsection{Comparison procedure}
We use a similar procedure to what was used in \cite{BonyadiMovementPattern2017} for comparison. Let $ A_{i,k} $ the set of all objective values over all runs found by the algorithm $ i \in \{1,...,n\} $ for the function $ k \in \{1,...,z\} $, i.e., $ A_{i,k} $ is a set of 50 values (50 runs each algorithm), each shows the objective value of the algorithm $ i $ at a particular run. We introduce the matrix $ C $ as follows:
\begin{eqnarray}
C_{i,j,k}= \begin{cases}
1 & \text{if }A_{i,k} \prec A_{j,k} \\
-1 & \text{if } A_{j,k} \prec A_{i,k}\\
0 & otherwise
\end{cases}
\nonumber
\end{eqnarray}

\noindent where $ A_{i,k} \prec A_{j,k} $ if and only if $ A_{i,k} $ is significantly better than $ A_{j,k} $ according to the Wilcoxon test (i.e., $ p<0.05 $ when $ A_{i,k} $ is compared with $ A_{j,k} $ and the median of $ A_{i,k} $ is smaller than $ A_{j,k} $'s). The value of $ T_{i,j}=\sum_{k=1}^z C_{i,j,k} $ indicates the number of functions for which the algorithm $ i $ was working significantly better than the algorithm $ j $\footnote{We assume that all functions are equally important, hence, if an algorithm performs better than another algorithm on more number functions then it is simply assumed that the first algorithm is better than the second. Thus, these calculations rely on the assumption that the selected benchmark represents the set of all problems of interest.}. If $ T_{i,j} > 0 $ (algorithm $ i $ beats algorithm $ j $) then the algorithm $ i $ is significantly better than algorithm $ j $ in more functions than $ j $ is significantly better than $ i $. The matrix $ T_{i,j} $ is used as the adjacency matrix to form the digraph $ G $. If $ T_{i,j}>0 $ then the edge $ (i,j) $ is added to $ G $. If $ T_{i,j}<0 $ then $ (j,i) $ is added to $ G $. The indegree of a node $ i $ in $ G $ would then indicate the number of algorithms the algorithm $ i $ can beat. 

\ignore{For each algorithm $ i $, we calculate the rank $ R_{i} $ as the number of times that $ T_{i,j} $ is positive for $ k =1 ... n $. If $ R_{i}=n-1 $ then the algorithm $ i $ is significantly better than all other algorithms in majority of test functions.}

\subsection{Experimental results}
The population size for all methods was set to $ 20 $ in all tests. The number of dimensions ($ D $) was $ 30 $, number of function evaluations was set to $ 5000D $, the benchmark test functions were taken from CEC2014 (30 functions) \cite{liang2013problem}, and number of runs was set to $ 50 $ to reduce the impact of initialization and randomization. Figure \ref{fig:alg-res} shows the digraph $ G $ when $ D=30 $. 
\begin{figure}
	\includegraphics[trim={3cm 4cm 3cm 2cm},clip,width=.5\textwidth]{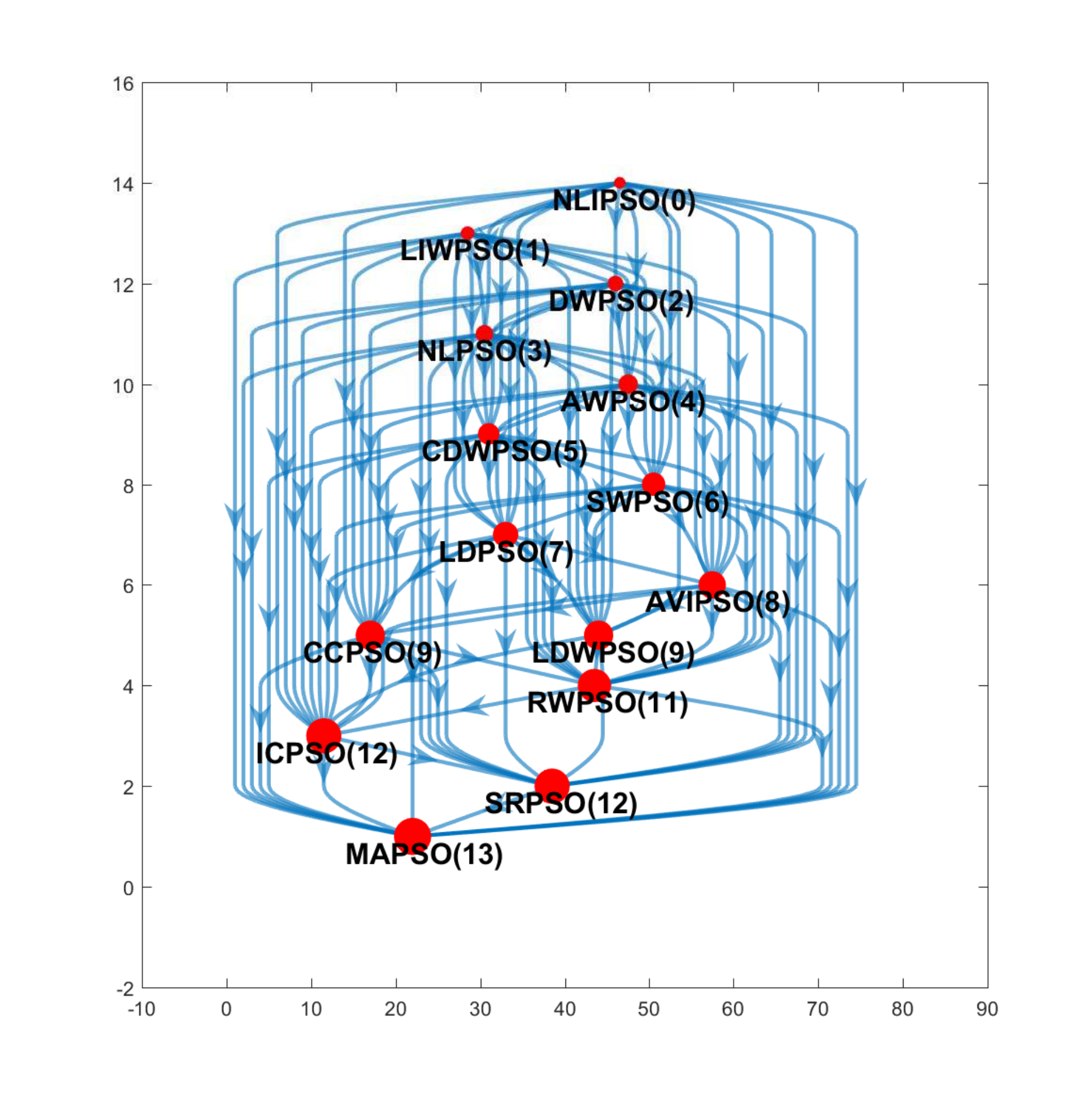} \\
	\caption{The values in the parentheses show the number of methods that a method can beat. An edge from $ i $ to $ j $ indicates that the method $ i $ beats the method $ j $. 30 dimensional problems with 20 particles in the swarms.}
	\label{fig:alg-res}
\end{figure}

From Figure \ref{fig:alg-res} it is clear that the simple time-adaptive approach designed based on the theoretical works in this paper performs well comparing to other existing  methods. This indeed indicates that adaptation of movement patterns provide more intuitive yet effective tool to design successful adaptive PSO methods.

\section{Conclusion and future works}
\label{sec:conclusions}
In this paper we investigated three factors, namely correlation between positions, expected movement distance, and focus of the search, that characterize movement patterns of particles in PSO. We formulated all of these factors as a function of particle coefficients (acceleration coefficients and inertia weight). All of our calculations were conducted under the most recent assumption for theoretical analysis of PSO (Assumption \ref{ass:genericRandomVariable}), i.e., all coefficients, personal best, and global best are random variables with predefined expectation and variance. Considering the particle position as a time series, we calculated the autocorrelation of the trajectory of the particle (the correlation between the particle position at iteration $ t+1 $ and its all previous positions) theoretically. We provided a guideline to control this linear dependency to achieve a random or smooth movement by the particle. We also provided details on the expected movement distance (e.g., expected velocity). We proved that the expected movement distance is a function of the expected search range, defined by the variance of movement, and the correlation between the particle position at each iteration with the previous. We discussed how the expected search range and expected movement distance are related to the means of global and local search abilities of particles. We also introduced a measure for the focus of the search that enables the user to control the particle ability in searching more around the personal or global best. Finally, we introduced a single equation that maps any feasible moment pattern, formulated by these three factors, to the inertia weight and acceleration coefficients, to ensure the given movement pattern is followed by the particle. We used these theoretical findings and designed a simple adaptive approach that was experimentally showed to be more effective than many existing adaptive PSO methods, validating our theoretical findings.

\ignore{
Our analyses showed that the movement patterns are not simple linear functions of the coefficients. Hence, a simple assumption that increasing inertia weight leads the particle to a better global search, made by \cite{Shi98Parameter}, is not correct as the expected search range, that is a measure for the particle ability to perform global or local movement, is in fact a non-linear function of all coefficients and not only the inertia weight. This assumption formed a base for many other time-adaptive PSO methods in which it was proposed to reduce the inertia weight during the run to ensure better local search at the latter stages of the search \cite{bonyadi2016particle}. Interestingly, \cite{zheng2003convergence} challenged this assumption and they showed that increasing inertia weight can be as effective as decreasing it during the search, an observation that was supported by some experiments. The main rationale in that article was that a small value for the inertia weight imposes a better "jump out" strategy while larger inertia weight leads the particle to settle. Although this is consistent with the means of autocorrelation in this paper, again, "jump out" is not only a function of autocorrelation but also the expected search range, i.e., a small expected search range might not be really effective for the "jump out" strategy.
}

The non-linear relationship between the movement patterns of particles and the coefficients, found in this article, indicate that the assumptions used for proposing adaptive approaches in PSO (e.g., \cite{Shi98Parameter,zheng2003convergence}) were somewhat simplistic. This provides theoretical justification for the findings in \cite{harrison2016inertia} where it was experimentally (on a rather large set of benchmark functions) shown that none of the PSO-based adaptive approaches tested in that study can beat a PSO with constant coefficients, that is indeed counterintuitive. 

We provided a simple approach to calculate coefficients (in $ O(1) $) in a way that the introduced movement characterization factors could be achieved. This provides a novel insight to the methods for controlling the particle coefficients, i.e., changing coefficients to achieve a particular pattern to perform a more effective search. Finding the relationship between the movement characteristics (autocorrelation and the expected search rang) with the characteristics of the landscape is one important future work. Ideally, a function should be designed that maps the search space characteristics (see \cite{malan2013survey} and \cite{malan2014characterising}) to $ <\rho_1, V_c, F> $. These values are then used to calculate $ <\omega,c,\alpha> $ through the procedure proposed in Theorem \ref{Thr:solvingtheSystemOfEq}.

\ignore{
\section{Acknowledgment}

}





\bibliographystyle{IEEEtran}
\bibliography{References}
\end{document}